\DeclarePairedDelimiterX{\infdivx}[2]{(}{)}{%
  #1\;\delimsize\|\;#2%
}
\newtheorem{theorem}{Theorem}[section]
\theoremstyle{remark}
\newtheorem*{remark}{Remark}
\DeclareMathOperator{\GC}{GC}
\author[Yanqiao Zhu, Yichen Xu, Feng Yu, Qiang Liu, Shu Wu, and Liang Wang]{Yanqiao Zhu$^{1,2,}$*, Yichen Xu$^{3,}$*, Feng Yu$^{4}$, Qiang Liu$^{1,2}$, Shu Wu$^{1,2,\dagger}$, and Liang Wang$^{1,2}$}
\def\authornotetext#1{
	\g@addto@macro\@authornotes{%
	\stepcounter{footnote}\footnotetext{#1}}%
}
\affiliation{%
	\institution{$^1$Center for Research on Intelligent Perception and Computing, Institute of Automation, Chinese Academy of Sciences}
	\institution{$^2$School of Artificial Intelligence, University of Chinese Academy of Sciences}
	\institution{$^3$School of Computer Science, Beijing University of Posts and Telecommunications \quad $^4$Alibaba Group}
	\country{}
}
\email{yanqiao.zhu@cripac.ia.ac.cn,  linyxus@bupt.edu.cn}
\email{yf271406@alibaba-inc.com,  {qiang.liu, shu.wu, wangliang}@nlpr.ia.ac.cn}
\begin{document}

\newcommand{\themodel}{GCA\xspace}

\title{Graph Contrastive Learning with Adaptive Augmentation}

\begin{abstract}
Recently, contrastive learning (CL) has emerged as a successful method for unsupervised graph representation learning. Most graph CL methods first perform stochastic augmentation on the input graph to obtain two graph views and maximize the agreement of representations in the two views.
Despite the prosperous development of graph CL methods, the design of graph augmentation schemes---a crucial component in CL---remains rarely explored. We argue that the data augmentation schemes should preserve intrinsic structures and attributes of graphs, which will force the model to learn representations that are insensitive to perturbation on unimportant nodes and edges.
However, most existing methods adopt uniform data augmentation schemes, like uniformly dropping edges and uniformly shuffling features, leading to suboptimal performance.
In this paper, we propose a novel graph contrastive representation learning method with adaptive augmentation that incorporates various priors for topological and semantic aspects of the graph.
Specifically, on the topology level, we design augmentation schemes based on node centrality measures to highlight important connective structures. On the node attribute level, we corrupt node features by adding more noise to unimportant node features, to enforce the model to recognize underlying semantic information.
We perform extensive experiments of node classification on a variety of real-world datasets. Experimental results demonstrate that our proposed method consistently outperforms existing state-of-the-art baselines and even surpasses some supervised counterparts, which validates the effectiveness of the proposed contrastive framework with adaptive augmentation.
\end{abstract}

\keywords{Contrastive learning, graph representation learning, unsupervised learning, self-supervised learning}

\maketitle

\section{Introduction}

\begin{figure*}
	\centering
	\includegraphics[width=\linewidth]{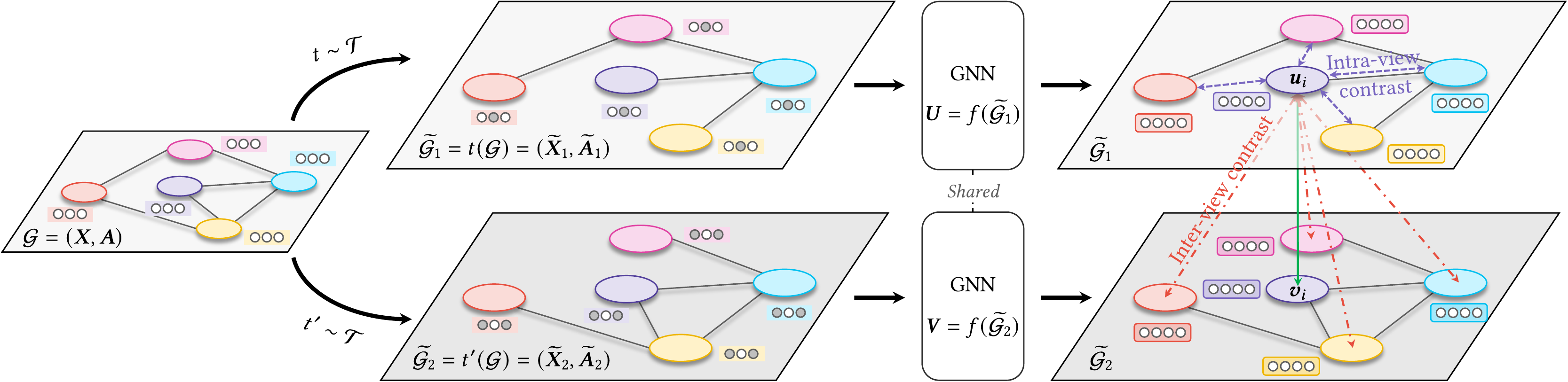}
	\caption{Our proposed deep Graph Contrastive representation learning with Adaptive augmentation (\themodel) model. We first generate two graph views via stochastic augmentation that is adaptive to the graph structure and attributes. Then, the two graphs are fed into a shared Graph Neural Network (GNN) to learn representations. We train the model with a contrastive objective, which pulls representations of one node together while pushing node representations away from other node representations in the two views. N.B., we define the negative samples as all other nodes in the two views. Therefore, negative samples are from two sources, intra-view (in purple) and inter-view nodes (in red).}
	\label{fig:model}
\end{figure*}

Over the past few years, graph representation learning has emerged as a powerful strategy for analyzing graph-structured data. Graph representation learning using Graph Neural Networks (GNN) has received considerable attention, which aims to transform nodes to low-dimensional dense embeddings that preserve graph attributive and structural features.
However, existing GNN models are mostly established in a supervised manner \cite{Kipf:2016tc,Velickovic:2018we,Hu:2019vq}, which require abundant labeled nodes for training.
Recently, Contrastive Learning (CL), as revitalization of the classical Information Maximization (InfoMax) principle \cite{Linsker:1988ho}, achieves great success in many fields, e.g., visual representation learning \cite{Tian:2019vw,He:2020tu,Bachman:2019wp} and natural language processing \cite{Weston:2008kg,Kavukcuoglu:2013to}. These CL methods seek to maximize the Mutual Information (MI) between the input (i.e. images) and its representations (i.e. image embeddings) by contrasting positive pairs with negative-sampled counterparts.

Inspired by previous CL methods, Deep Graph InfoMax (DGI) \cite{Velickovic:2019tu} marries the power of GNN into InfoMax-based methods. DGI firstly augments the original graph by simply shuffling node features. Then, a contrastive objective is proposed to maximize the MI between node embeddings and a global summary embedding.
Following DGI, GMI \cite{Peng:2020gw} proposes two contrastive objectives to directly measure MI between input and representations of nodes and edges respectively, without explicit data augmentation.
Moreover, to supplement the input graph with more global information, MVGRL \cite{Hassani:2020un} proposes to augment the input graph via graph diffusion kernels \cite{Klicpera:2019vc}. Then, it constructs graph views by uniformly sampling subgraphs and learns to contrast node representations to global embeddings across the two views.

Despite the prosperous development of graph CL methods, data augmentation schemes, proved to be a critical component for visual representation learning \cite{Wu:2020tj}, remain rarely explored in existing literature.
Unlike abundant data transformation techniques available for images and texts, graph augmentation schemes are non-trivial to define in CL methods, since graphs are far more complex due to the non-Euclidean property. 
We argue that the augmentation schemes used in the aforementioned methods suffer from two drawbacks.
At first, simple data augmentation in \emph{either} the structural domain \emph{or} the attribute domain, such as feature shifting in DGI \cite{Velickovic:2019tu}, is not sufficient for generating diverse neighborhoods (i.e. contexts) for nodes, especially when node features are sparse, leading to difficulty in optimizing the contrastive objective.
Secondly, previous work ignores the discrepancy in the impact of nodes and edges when performing data augmentation. For example, if we construct graph views by \emph{uniformly} dropping edges, removing some influential edges will deteriorate the embedding quality.
As the representations learned by the contrastive objective tend to be \emph{invariant} to corruption induced by the data augmentation scheme \cite{Xiao:2020vt}, the data augmentation strategies should be \emph{adaptive} to the input graph to reflect its intrinsic patterns.
Again, taking the edge removing scheme as an example, we can give larger probabilities to unimportant edges and lower probabilities to important ones, when randomly removing the edges. Then, this scheme is able to guide the model to ignore the introduced noise on unimportant edges and thus learn important patterns underneath the input graph.

To this end, we propose a novel contrastive framework for unsupervised graph representation learning, as shown in Figure \ref{fig:model}, which we refer to as \underline{G}raph \underline{C}ontrastive learning with \underline{A}daptive augmentation, \themodel for brevity.
In \themodel, we first generate two correlated graph views by performing stochastic corruption on the input. Then, we train the model using a contrastive loss to maximize the agreement between node embeddings in these two views.
Specifically, we propose a joint, adaptive data augmentation scheme at both topology and node attribute levels, namely removing edges and masking features, to provide diverse contexts for nodes in different views, so as to boost optimization of the contrastive objective.
Moreover, we identify important edges and feature dimensions via centrality measures.
Then, on the topology level, we adaptively drop edges by giving large removal probabilities to unimportant edges to highlight important connective structures.
On the node attribute level, we corrupt attributes by adding more noise to unimportant feature dimensions, to enforce the model to recognize underlying semantic information.

The core contribution of this paper is two-fold:
\begin{itemize}
	\item Firstly, we propose a general contrastive framework for unsupervised graph representation learning with strong, adaptive data augmentation. The proposed \themodel framework jointly performs data augmentation on both topology and attribute levels that are adaptive to the graph structure and attributes, which encourages the model to learn important features from both aspects.
	\item Secondly, we conduct comprehensive empirical studies using five public benchmark datasets on node classification under the commonly-used linear evaluation protocol. \themodel consistently outperforms existing methods and our unsupervised method even surpasses its supervised counterparts on several transductive tasks.
\end{itemize}
To make the results of this work reproducible, we make all the code publicly available at \url{https://github.com/CRIPAC-DIG/GCA}.

The remaining of the paper includes the following sections. We briefly review related work in Section \ref{sec:related-work}. In Section \ref{sec:method}, we present the proposed \themodel model in detail. The results of the experiments are analyzed in Section \ref{sec:experiments}. Finally, we conclude the paper in Section \ref{sec:conclusion}. For readers of interest, additional configurations of experiments and details of proofs are provided in Appendix \ref{appendix:implementation} and \ref{appendix:proofs}, respectively.

\section{Related Work}
\label{sec:related-work}

In this section, we briefly review prior work on contrastive representation learning. Then, we review graph representation learning methods. At last, we provide a summary of comparisons between the proposed method and its related work.

\subsection{Contrastive Representation Learning}
Being popular in self-supervised representation learning, contrastive methods aim to learn discriminative representations by contrasting positive and negative samples. For visual data, negative samples can be generated using a multiple-stage augmentation pipeline \cite{Chen:2020wj,Bachman:2019wp,Falcon:2020uv}, consisting of color jitter, random flip, cropping, resizing, rotation \cite{Gidaris:2018wr}, color distortion \cite{Larsson:2017vt}, etc.
Existing work \cite{Wu:2018kw,Tian:2019vw,He:2020tu} employs a memory bank for storing negative samples. Other work \cite{Bachman:2019wp,Ye:2019we,Chen:2020wj} explores in-batch negative samples. For an image patch as the anchor, these methods usually find a global summary vector \cite{Hjelm:2019uk,Bachman:2019wp} or patches in neighboring views \cite{vandenOord:2018ut,Henaff:2020ta} as the positive sample, and contrast them with negative-sampled counterparts, such as patches of other images within the same batch \cite{Hjelm:2019uk}.

Theoretical analysis sheds light on the reasons behind their success \cite{Poole:2019vk}. Objectives used in these methods can be seen as maximizing a lower bound of MI between input features and their representations \cite{Linsker:1988ho}. However, recent work \cite{Tschannen:2020uo} reveals that downstream performance in evaluating the quality of representations may strongly depend on the bias that is encoded not only in the convolutional architectures but also in the specific estimator of the InfoMax objective.

\subsection{Graph Representation Learning}
Many traditional methods on unsupervised graph representation learning inherently follow the contrastive paradigm \cite{Perozzi:2014ib,Grover:2016ex,Kipf:2016ul,Hamilton:2017wa}.
Prior work on unsupervised graph representation learning focuses on local contrastive patterns, which forces neighboring nodes to have similar embeddings. For example, in the pioneering work DeepWalk \cite{Perozzi:2014ib} and node2vec \cite{Grover:2016ex}, nodes appearing in the same random walk are considered as positive samples. Moreover, to model probabilities of node co-occurrence pairs, many studies resort to Noise-Contrastive Estimation (NCE) \cite{Gutmann:2012eq}.
However, these random-walk-based methods are proved to be equivalent to factorizing some forms of graph proximity (e.g., multiplication of the adjacent matrix to model high-order connection) \cite{Qiu:2018ez} and thus tend to overly emphasize on the encoded structural information. Also, these methods are known to be error-prone with inappropriate hyperparameter tuning \cite{Perozzi:2014ib,Grover:2016ex}.

Recent work on Graph Neural Networks (GNNs) employs more powerful graph convolutional encoders over conventional methods. Among them, considerable literature has grown up around the theme of supervised GNN \cite{Kipf:2016tc,Velickovic:2018we,Hu:2019vq,Wu:2019vz}, which requires labeled datasets that may not be accessible in real-world applications.
Along the other line of development, unsupervised GNNs receive little attention. Representative methods include GraphSAGE \cite{Hamilton:2017tp}, which incorporates DeepWalk-like objectives.
Recent work DGI \cite{Velickovic:2019tu} marries the power of GNN and CL, which focuses on maximizing MI between global graph-level and local node-level embeddings. Specifically, to implement the InfoMax objective, DGI requires an injective readout function to produce the global graph-level embedding. However, it is too restrictive to fulfill the injective property of the graph readout function, such that the graph embedding may be deteriorated. In contrast to DGI, our preliminary work \cite{Zhu:2020vf} proposes to not rely on an explicit graph embedding, but rather focuses on maximizing the agreement of node embeddings across two corrupted views of the graph.

Following DGI, GMI \cite{Peng:2020gw} employs two discriminators to directly measure MI between input and representations of both nodes and edges without data augmentation; MVGRL \cite{Hassani:2020un} proposes to learn both node- and graph-level representations by performing node diffusion and contrasting node representations to augmented graph summary representations.
Moreover, GCC \cite{Qiu:2020gq} proposes a pretraining framework based on CL. It proposes to construct multiple graph views by sampling subgraphs based on random walks and then learn model weights with several feature engineering schemes.
However, these methods do not explicitly consider adaptive graph augmentation at both structural and attribute levels, leading to suboptimal performance. Unlike these work, the adaptive data augmentation at both topology and attribute levels used in our GCA is able to preserve important patterns underneath the graph through stochastic perturbation.

\paragraph{Comparisons with related graph CL methods.}
In summary, we provide a brief comparison between the proposed GCA and other state-of-the-art graph contrastive representation learning methods, including DGI \cite{Velickovic:2019tu}, GMI \cite{Peng:2020gw}, and MVGRL \cite{Hassani:2020un} in Table \ref{tab:comparison}, where the last two columns denote data augmentation strategies at topology and attribute levels respectively.
It is seen that the proposed GCA method simplifies previous node--global contrastive scheme by defining contrastive objective at the node level. Most importantly, GCA is the only one that proposes adaptive data augmentation on both topology and attribute levels.

\begin{table}
	\centering
	\caption{Comparison with related work.}
	\begin{tabular}{cccc}
	\toprule
    Method & \makecell{Contrastive\\objective} & Topology & Attribute \\
    \midrule
    DGI   & Node--global & Uniform  & ---   \\
    GMI   & Node--node   & ---      & ---   \\
    MVGRL & Node--global & Uniform  & ---   \\
    \textbf{\themodel} & \textbf{Node--node} & \textbf{Adaptive} & \textbf{Adaptive} \\
    \bottomrule
    \end{tabular}
  \label{tab:comparison}
\end{table}

\makeatletter
\def\mathcenterto#1#2{\mathclap{\phantom{#1}\mathclap{#2}}\phantom{#1}}
\let\old@widetilde\widetilde
\def\widetildeto#1#2{\mathcenterto{#2}{\old@widetilde{\mathcenterto{#1}{#2\,}}}}
\let\old@widehat\widehat
\def\widehatto#1#2{\mathcenterto{#2}{\old@widehat{\mathcenterto{#1}{#2\,}}}}
\makeatother
\def\widetilde{\widetildeto{X}}

\section{The Proposed Method}
\label{sec:method}

In the following section, we present \themodel in detail, starting with the overall contrastive learning framework, followed by the proposed adaptive graph augmentation schemes. Finally, we provide theoretical justification behind our method.

\subsection{Preliminaries}
Let \(\mathcal{G} = (\mathcal{V}, \mathcal{E})\) denote a graph, where \(\mathcal{V} = \{ v_1, v_2, \cdots, v_N\}\), \(\mathcal{E} \subseteq \mathcal V \times \mathcal V\) represent the node set and the edge set respectively. We denote the feature matrix and the adjacency matrix as \(\bm{X} \in \mathbb{R}^{N \times F}\) and \(\bm{A} \in \{0,1\}^{N \times N}\), where \(\bm{x}_i \in \mathbb{R}^{F}\) is the feature of \(v_i\), and \(\bm{A}_{ij} = 1\) iff \((v_i, v_j) \in \mathcal{E}\).
There is no given class information of nodes in \(\mathcal{G}\) during training in the unsupervised setting.
Our objective is to learn a GNN encoder \(f(\bm{X}, \bm{A}) \in \mathbb{R}^{N \times F^\prime}\) receiving the graph features and structure as input, that produces node embeddings in low dimensionality, i.e. \(F^\prime \ll F\). We denote \(\bm{H} = f(\bm{X}, \bm{A})\) as the learned representations of nodes, where \(\bm{h}_i\) is the embedding of node \(v_i\). These representations can be used in downstream tasks, such as node classification and community detection.

\subsection{The Contrastive Learning Framework} 

The proposed \themodel framework follows the common graph CL paradigm where the model seeks to maximize the agreement of representations between different views \cite{Zhu:2020vf,Hassani:2020un}.
To be specific, we first generate two graph views by performing stochastic graph augmentation on the input. Then, we employ a contrastive objective that enforces the encoded embeddings of each node in the two different views to agree with each other and can be discriminated from embeddings of other nodes.

In our \themodel model, at each iteration, we sample two stochastic augmentation function \(t \sim \mathcal{T}\) and \(t' \sim \mathcal{T}\), where \(\mathcal{T}\) is the set of all possible augmentation functions. Then, we generate two graph views, denoted as \(\widetilde{\mathcal G}_1 = t(\widetilde{\mathcal G})\) and \(\widetilde{\mathcal G}_2 = t'(\widetilde{\mathcal G})\), and denote node embeddings in the two generated views as \(\bm{U} = f(\widetilde{\bm{X}}_1, \widetilde{\bm A}_1)\) and \(\bm{V} = f(\widetilde{\bm{X}}_2, \widetilde{\bm A}_2)\), where \(\widetilde{\bm{X}}_{\ast}\) and \(\widetilde{\bm{A}}_{\ast}\) are the feature matrices and adjacent matrices of the views.

After that, we employ a contrastive objective, i.e. a discriminator, that distinguishes the embeddings of the same node in these two different views from other node embeddings.
For any node \(v_i\), its embedding generated in one view, \(\bm{u}_i\), is treated as the anchor, the embedding of it generated in the other view, \(\bm{v}_i\), forms the positive sample, and the other embeddings in the two views are naturally regarded as negative samples.
Mirroring the InfoNCE objective \cite{vandenOord:2018ut} in our multi-view graph CL setting, we define the pairwise objective for each positive pair \((\bm{u}_i, \bm{v}_i)\) as
\begin{equation}
\begin{split}
	\ell &(\bm{u}_i, \bm{v}_i) =\\
	&\log \frac {e^{\theta\left(\bm{u}_i, \bm{v}_{i} \right) / \tau}} {\underbrace{e^{\theta\left(\bm{u}_i, \bm{v}_{i} \right) / \tau}}_{\text{positive pair}} + \underbrace{\sum_{k \neq i} e^{\theta\left(\bm{u}_i, \bm{v}_{k} \right) / \tau}}_{\text{inter-view negative pairs}} + \underbrace{\sum_{k \neq i}e^{\theta\left(\bm{u}_i, \bm{u}_k \right) / \tau}}_{\text{intra-view negative pairs}}},\\
\end{split}
\label{eq:pairwise-loss}
\end{equation}
where \(\tau\) is a temperature parameter. We define the critic \(\theta(\bm{u}, \bm{v}) = s(g(\bm{u}), g(\bm{v}))\), where \(s(\cdot, \cdot)\) is the cosine similarity and \(g(\cdot)\) is a non-linear projection to enhance the expression power of the critic function \cite{Chen:2020wj,Tschannen:2020uo}. The projection function \(g\) in our method is implemented with a two-layer perceptron model.

Given a positive pair, we naturally define negative samples as all other nodes in the two views. Therefore, negative samples come from two sources, that are inter-view and intra-view nodes, corresponding to the second and the third term in the denominator in Eq. (\ref{eq:pairwise-loss}), respectively. Since two views are symmetric, the loss for another view is defined similarly for \(\ell(\bm{v}_i, \bm u_i)\). 
The overall objective to be maximized is then defined as the average over all positive pairs, formally given by
\begin{equation}
	\mathcal{J} = \frac{1}{2N} \sum_{i = 1}^{N} \left[\ell(\bm{u}_i, \bm{v}_i) + \ell(\bm{v}_i, \bm{u}_i)\right].
	\label{eq:overall-loss}
\end{equation}

To sum up, at each training epoch, \themodel first draws two data augmentation functions \(t\) and \(t'\), and then generates two graph views \(\widetilde{\mathcal{G}}_1 = t(\mathcal{G})\) and \(\widetilde{\mathcal{G}}_2 = t'(\mathcal{G})\) of graph \(\mathcal{G}\) accordingly. Then, we obtain node representations \(\bm{U}\) and \(\bm{V}\) of \(\widetilde{\mathcal{G}}_1\) and \(\widetilde{\mathcal{G}}_2\) using a GNN encoder \(f\). Finally, the parameters are updated by maximizing the objective in Eq. (\ref{eq:overall-loss}). The training algorithm is summarized in Algorithm \ref{algo:training}.

\begin{algorithm}[ht]
	\DontPrintSemicolon\SetNoFillComment
	\caption{The \themodel training algorithm}
	\label{algo:training}
	\For {\(epoch \gets 1, 2, \cdots\)} {
		Sample two stochastic augmentation functions \(t \sim \mathcal{T}\) and \(t' \sim \mathcal{T}\)\;
		Generate two graph views \(\widetilde{\mathcal{G}}_1 = t(\mathcal{G})\) and \(\widetilde{\mathcal{G}}_2 = t'(\mathcal{G})\) by performing corruption on \(\mathcal{G}\)\;
		Obtain node embeddings \(\bm{U}\) of \(\widetilde{\mathcal{G}}_1\) using the encoder \(f\)\;
		Obtain node embeddings \(\bm{V}\) of \(\widetilde{\mathcal{G}}_2\) using the encoder \(f\)\;
		Compute the contrastive objective \(\mathcal{J}\) with Eq. (\ref{eq:overall-loss})\;
		Update parameters by applying stochastic gradient ascent to maximize \(\mathcal{J}\)\;
	}
\end{algorithm}

\subsection{Adaptive Graph Augmentation}

In essence, CL methods that maximize agreement between views seek to learn representations that are \emph{invariant} to perturbation introduced by the augmentation schemes \cite{Xiao:2020vt}.
In the \themodel model, we propose to design augmentation schemes that tend to keep important structures and attributes unchanged, while perturbing possibly unimportant links and features. Specifically, we corrupt the input graph by randomly removing edges and masking node features in the graph, and the removing or masking probabilities are skewed for unimportant edges or features, that is, higher for unimportant edges or features, and lower for important ones.
From an amortized perspective, we emphasize important structures and attributes over randomly corrupted views, which will guide the model to preserve fundamental topological and semantic graph patterns.

\subsubsection{Topology-level augmentation.}

For topology-level augmentation, we consider a direct way for corrupting input graphs where we randomly remove edges in the graph \cite{Zhu:2020vf}. Formally, we sample a modified subset \(\widetilde{\mathcal{E}}\) from the original \(\mathcal E\) with probability
\begin{equation}
	P \{ (u, v) \in \widetilde{\mathcal E} \} = 1 - p^e_{uv},
\end{equation}
where \((u, v) \in \mathcal{E}\) and \(p^e_{uv}\) is the probability of removing \((u, v)\). \(\widetilde {\mathcal{E}}\) is then used as the edge set in the generated view. \(p^e_{uv}\) should reflect the importance of the edge \((u, v)\) such that the augmentation function are more likely to corrupt unimportant edges while keep important connective structures intact in augmented views.

In network science, node centrality is a widely-used measure that quantifies the influence of nodes in the graph \cite{Newman:2018aw}. 
We define edge centrality \(w^e_{uv}\) for edge \((u, v)\) to measure its influence based on centrality of two connected nodes. Given a node centrality measure \(\varphi_c(\cdot) : \mathcal V \rightarrow \mathbb R^+\), we define edge centrality as the average of two adjacent nodes' centrality scores, i.e. \(w^e_{uv} = (\varphi_c(u) + \varphi_c(v))/2\), and on directed graph, we simply use the centrality of the tail node, i.e. \(w^e_{uv} = \varphi_c(v)\), since the importance of edges is generally characterized by nodes they are pointing to \cite{Newman:2018aw}.

Next, we calculate the probability of each edge based on its centrality value. Since node centrality values like degrees may vary across orders of magnitude \cite{Newman:2018aw}, we first set \(s^e_{uv} = \log w^e_{uv}\) to alleviate the impact of nodes with heavily dense connections. The probabilities can then be obtained after a normalization step that transform the values into probabilities, which is defined as
\begin{equation}
  p^e_{uv} = \min \left( \frac {s^e_{\max} - s^e_{uv}} {s^e_{\max} - \mu^e_s} \cdot p_e , p_\tau \right),
\end{equation}
where \(p_e\) is a hyperparameter that controls the overall probability of removing edges, \(s^e_{\max}\) and \(\mu^e_s\) is the maximum and average of \(s^e_{uv}\), and \(p_\tau < 1\) is a cut-off probability, used to truncate the probabilities since extremely high removal probabilities will lead to overly corrupted graph structures.

For the choice of the node centrality function, we use the following three centrality measures, including degree centrality, eigenvector centrality, and PageRank centrality due to their simplicity and effectiveness. 

\paragraph{Degree centrality.}
Node degree itself can be a centrality measure \cite{Newman:2018aw}.
On directed networks, we use in-degrees since the influence of a node in directed graphs are mostly bestowed by nodes pointing at it \cite{Newman:2018aw}. Despite that the node degree is one of the simplest centrality measures, it is quite effective and illuminating. For example, in citation networks where nodes represent papers and edges represent citation relationships, nodes with the highest degrees are likely to correspond to influential papers.

\paragraph{Eigenvector centrality.}
The eigenvector centrality \cite{Newman:2018aw,Bonacich:1987up} of a node is calculated as its eigenvector corresponding to the largest eigenvalue of the adjacency matrix.
Unlike degree centrality, which assumes that all neighbors contribute equally to the importance of the node, eigenvector centrality also takes the importance of neighboring nodes into consideration. By definition, the eigenvector centrality of each node is proportional to the sum of centralities of its neighbors, nodes that are either connected to many neighbors or connected to influential nodes will have high eigenvector centrality values. On directed graphs, we use the right eigenvector to compute the centrality, which corresponds to incoming edges. Note that since only the leading eigenvector is needed, the computational burden for calculating the eigenvector centrality is negligible.

\paragraph{PageRank centrality.}
The PageRank centrality \cite{Newman:2018aw,Page:1999wg} is defined as the PageRank weights computed by the PageRank algorithm. The algorithm propagates influence along directed edges, and nodes gathered the most influence are regarded as important nodes. Formally, the centrality values are defined by
\begin{equation}
  \bm\sigma = \alpha \bm{A} \bm{D}^{-1} \bm\sigma + \bm{1},
\end{equation}
where \(\sigma \in \mathbb{R}^N\) is the vector of PageRank centrality scores for each node and \(\alpha\) is a damping factor that prevents sinks in the graph from absorbing all ranks from other nodes connected to the sinks. We set \(\alpha=0.85\) as suggested in \citet{Page:1999wg}.
For undirected graphs, we execute PageRank on transformed directed graphs, where each undirected edge is converted to two directed edges.

\begin{figure}[b]
	\centering
	\subfloat[Degree]{
		\includegraphics[width=0.31\linewidth,frame]{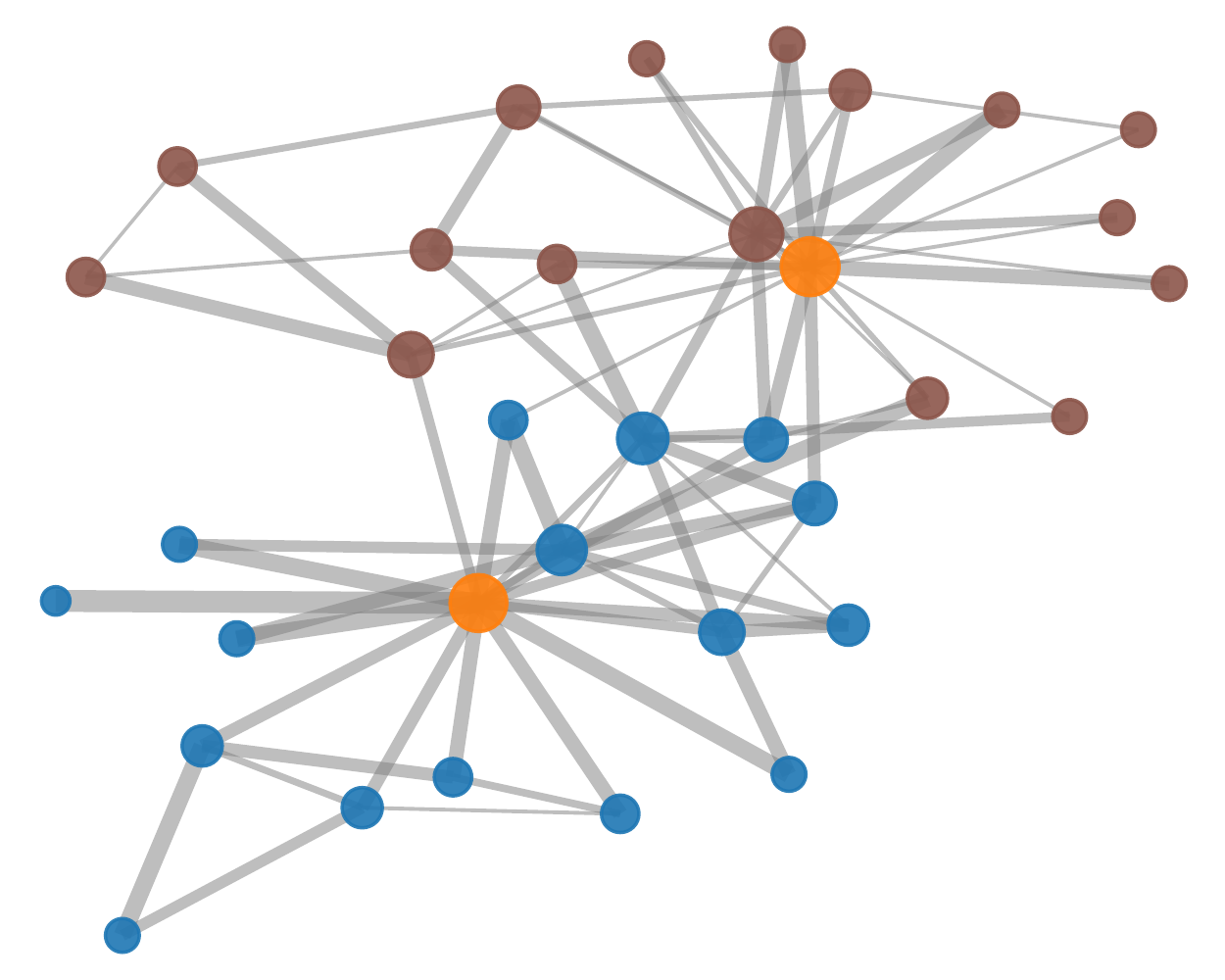}
	}
	\subfloat[Eigenvector]{
		\includegraphics[width=0.31\linewidth,frame]{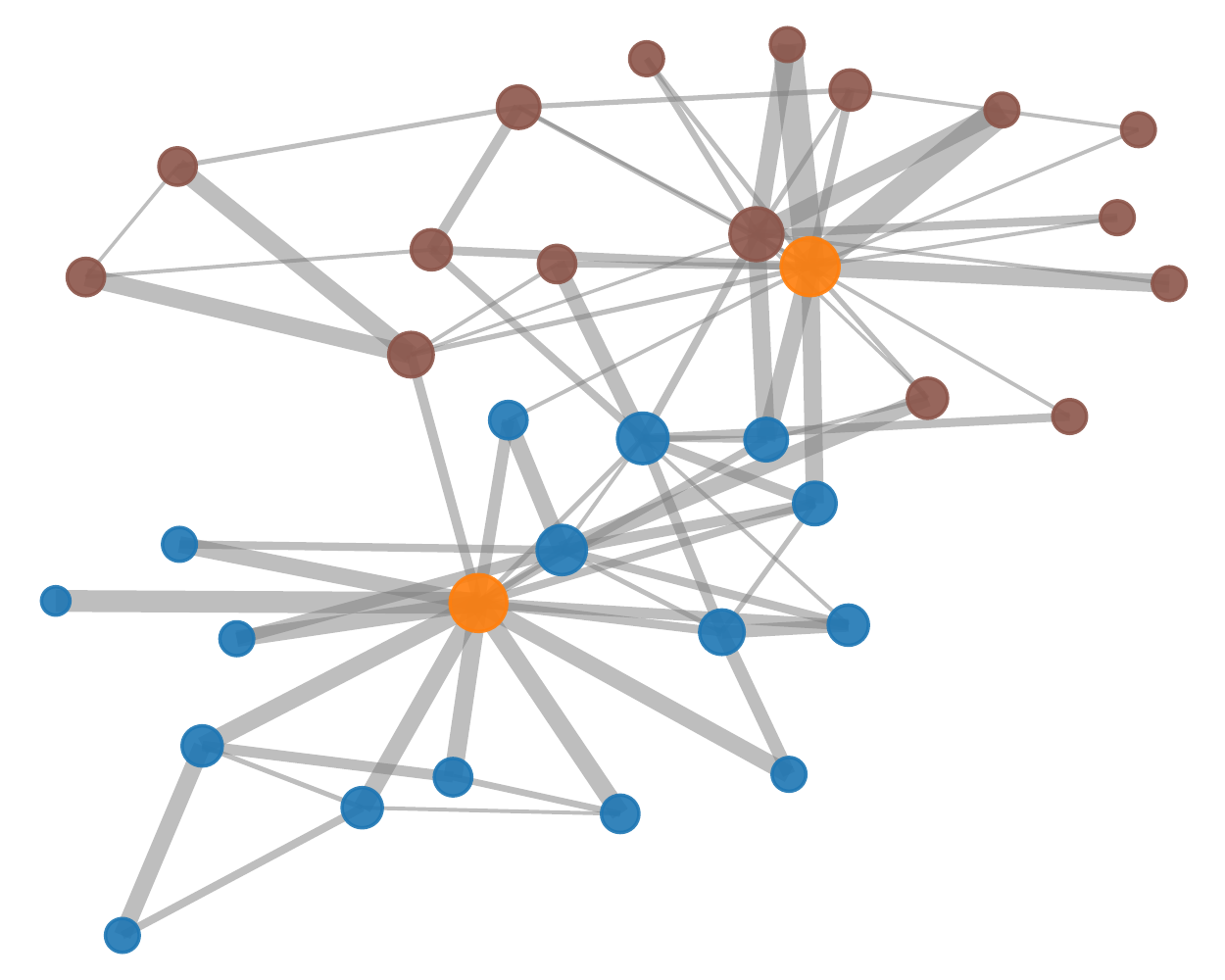}
	}
	\subfloat[PageRank]{
		\includegraphics[width=0.31\linewidth,frame]{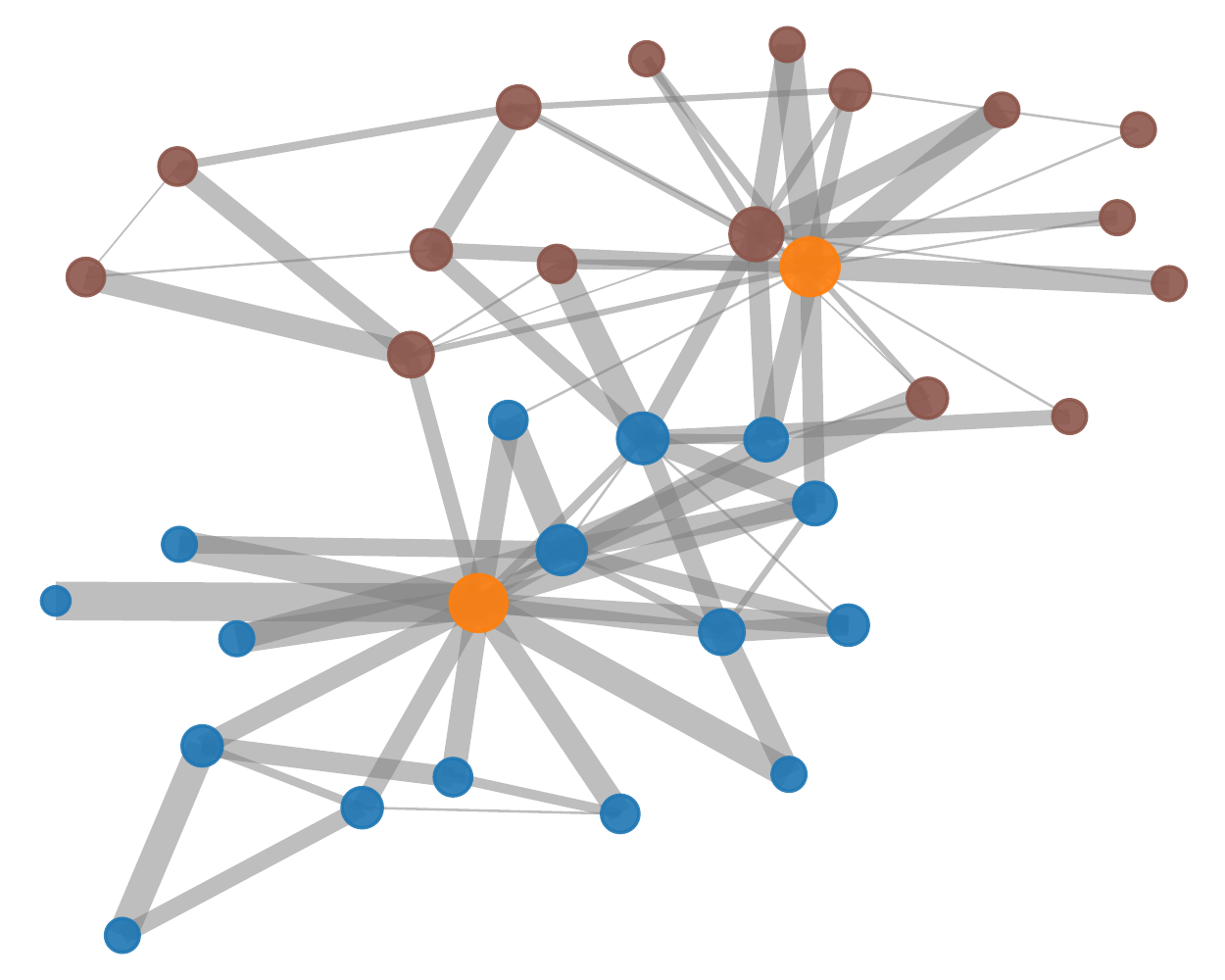}
	}
	\caption{Visualization of edge centrality computed by three schemes in the Karate club dataset, where centrality values are shown in terms of the thickness of edges. Node colors indicate two classes inside the network; two coaches are in orange.}
	\label{fig:visualization-edge-weights}
\end{figure}

To gain an intuition of these proposed adaptive structural augmentation schemes, we calculate edge centrality scores of the famous Karate club dataset \cite{Zachary:1977fs}, containing two groups of students leading by two coaches respectively. The edge centrality values calculated by different schemes are visualized in Figure \ref{fig:visualization-edge-weights}.
As can be seen in the figure, though the three schemes exhibit subtle differences, all of the augmentation schemes tend to emphasize edges that connect the two coaches (in orange) inside the two groups and put less attention to links between peripheral nodes across groups. This verifies that the proposed node-centrality-based adaptive topology augmentation scheme can recognize fundamental structures of the graph. 

\subsubsection{Node-attribute-level augmentation.}

On the node attribute level, similar to the salt-and-pepper noise in digital image processing \cite{Gonzalez:2018dp}, we add noise to node attributes via randomly masking a fraction of dimensions with zeros in node features. Formally, we first sample a random vector \(\widetilde{\bm{m}} \in \{ 0, 1 \}^F\) where each dimension of it independently is drawn from a Bernoulli distribution independently, i.e., \(\widetilde{m}_i \sim \text{Bern}(1 - p^f_i), \forall i\). Then, the generated node features \(\widetilde {\bm X}\) is computed by
\begin{equation}
	\widetilde {\bm{X}} = [ \bm{x}_1 \circ \widetilde{\bm{m}}; \bm{x}_2 \circ \widetilde{\bm{m}}; \cdots; \bm{x}_N \circ \widetilde{\bm{m}} ]^\top.
\end{equation}
Here \([\cdot ; \cdot ]\) is the concatenation operator, and \(\circ\) is the element-wise multiplication.

Similar to topology-level augmentation, the probability \(p^f_i\) should reflect the importance of the \(i\)-th dimension of node features. We assume that feature dimensions frequently appearing in influential nodes should be important, and define the weights of feature dimensions as follows. For sparse one-hot nodes features, i.e. \(x_{ui} \in \{ 0, 1 \}\) for any node \(u\) and feature dimension \(i\), we calculate the weight of dimension \(i\) as
\begin{equation}
  w^f_i = \sum_{u \in \mathcal V} x_{ui} \cdot \varphi_c (u),
\end{equation}
where \(\varphi_c (\cdot)\) is a node centrality measure that is used to quantify node importance. The first term \(x_{ui} \in \{ 0, 1 \}\) indicates the occurrence of dimension \(i\) in node \(u\), and the second term \(\varphi_i(u)\) measures the node importance of each occurrence.
To provide some intuition behind the above definition, consider a citation network where each feature dimension corresponds to a keyword. Then, keywords that frequently appear in a highly influential paper should be considered informative and important.

For dense, continuous node features \(\bm{x}_u\) of node \(u\), where \(x_{ui}\) denotes feature value at dimension \(i\), we cannot directly count the occurrence of each one-hot encoded value. Then, we turn to measure the magnitude of the feature value at dimension \(i\) of node \(u\) by its absolute value \(| x_{ui} |\). Formally, we calculate the weights by
\begin{equation}
	w^f_i = \sum_{u \in \mathcal V} |x_{ui}| \cdot \varphi_c (u).
\end{equation}

Similar to topology augmentation, we perform normalization on the weights to obtain the probability representing feature importance. Formally,
\begin{equation}
	p_i^f = \min \left( \frac {s^f_{\max} - s^f_{i}} {s^f_{\max} - \mu_s^f} \cdot p_f, p_\tau \right),
\end{equation}
where \(s^f_i = \log w^f_i\), \(s^f_{\max}\) and \(\mu^f_s\) is the maximum and the average value of \(s^f_i\) respectively, and \(p_f\) is a hyperparameter that controls the overall magnitude of feature augmentation.

Finally, we generate two corrupted graph views \(\widetilde {\mathcal{G}}_1, \widetilde {\mathcal{G}}_2\) by jointly performing topology- and node-attribute-level augmentation. In \themodel, the probability \(p_e\) and \(p_f\) is different for generating the two views to provide a diverse context for contrastive learning, where the probabilities for the first and the second view are denoted by \(p_{e,1}, p_{f,1}\) and \(p_{e,2}, p_{f,2}\) respectively.

In this paper, we propose and evaluate three model variants, denoted as \themodel-DE, \themodel-EV, and \themodel-PR. The three variants employ degree, eigenvector, and PageRank centrality measures respectively.
Note that all centrality and weight measures are only dependent on the topology and node attributes of the original graph. Therefore, they only need to be computed once and do not bring much computational burden.

\subsection{Theoretical Justification}

In this section, we provide theoretical justification behind our model from two perspectives, i.e. MI maximization and the triplet loss. Detailed proofs can be found in Appendix \ref{appendix:proofs}.

\paragraph{Connections to MI maximization.}
Firstly, we reveal the connections between our loss and MI maximization between node features and the embeddings in the two views. The InfoMax principle has been widely applied in representation learning literature \cite{Tian:2019vw,Bachman:2019wp,Poole:2019vk,Tschannen:2020uo}. MI quantifies the amount of information obtained about one random variable by observing the other random variable.
\begin{theorem}
\label{thm:objective-InfoMax}
Let \(\bm{X}_i = \{ \bm{x}_k \}_{k \in \mathcal{N}(i)}\) be the neighborhood of node \(v_i\) that collectively maps to its output embedding, where \(\mathcal{N}(i)\) denotes the set of neighbors of node \(v_i\) specified by GNN architectures, and \(\bm{X}\) be the corresponding random variable with a uniform distribution \(p(\bm{X}_i) = \nicefrac{1}{N}\). Given two random variables \(\bm{U, V} \in \mathbb{R}^{F'}\) being the embedding in the two views, with their joint distribution denoted as \(p(\bm{U}, \bm{V})\), our objective \(\mathcal{J}\) is a lower bound of MI between encoder input \(\bm{X}\) and node representations in two graph views \(\bm{U, V}\). Formally,
\begin{equation}
	\mathcal{J} \leq I(\bm{X}; \bm{U}, \bm{V}).
\end{equation}
\end{theorem}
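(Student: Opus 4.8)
The plan is to proceed in two stages: first relate the objective $\mathcal{J}$ to a well-known variational bound on mutual information (the InfoNCE bound), then chain that bound with a data-processing-type inequality to land on $I(\bm{X}; \bm{U}, \bm{V})$. For the first stage, I would recall the result of van den Oord et al.\ (and its analysis by Poole et al.) that for any critic function, the InfoNCE-style loss built from one positive pair and a collection of negatives is a lower bound on the mutual information between the two variables being contrasted. Concretely, defining $\mathcal{I}_{\mathrm{NCE}}(\bm{U}; \bm{V})$ via exactly the log-softmax expression in Eq.~\eqref{eq:pairwise-loss} — with the subtlety that our denominator contains \emph{both} inter-view and intra-view negatives, so the effective number of negatives is larger — one shows $\mathcal{J} \leq \mathcal{I}_{\mathrm{NCE}}(\bm{U}; \bm{V})$ after symmetrizing over the two views as in Eq.~\eqref{eq:overall-loss}. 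The extra intra-view terms $\sum_{k\neq i} e^{\theta(\bm{u}_i,\bm{u}_k)/\tau}$ only enlarge the denominator, hence decrease each $\ell$, so they preserve the direction of the inequality; this needs to be stated carefully but is not deep.

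The second stage is to pass from $\mathcal{I}_{\mathrm{NCE}}(\bm{U};\bm{V}) \leq I(\bm{U};\bm{V})$ up to $I(\bm{X};\bm{U},\bm{V})$. Here I would use the structure that $\bm{U}$ and $\bm{V}$ are both deterministic (randomized only through the augmentation draws $t,t'$) functions of the underlying neighborhood variable $\bm{X}$: the augmented views are produced from $\mathcal{G}$, and each node embedding $\bm{u}_i$, $\bm{v}_i$ depends only on the receptive field $\bm{X}_i$ (this is precisely why the theorem's hypothesis identifies $\bm{X}_i$ with $\{\bm{x}_k\}_{k\in\mathcal{N}(i)}$). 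Thus $\bm{V}$ is a (stochastic) function of $\bm{X}$ alone, giving the Markov chain $\bm{U} \leftarrow \bm{X} \rightarrow \bm{V}$, from which $I(\bm{U};\bm{V}) \leq I(\bm{U};\bm{X})$ and, combined with the chain rule $I(\bm{X};\bm{U},\bm{V}) = I(\bm{X};\bm{U}) + I(\bm{X};\bm{V}\mid\bm{U}) \geq I(\bm{X};\bm{U})$, we get $\mathcal{I}_{\mathrm{NCE}}(\bm{U};\bm{V}) \leq I(\bm{U};\bm{V}) \leq I(\bm{X};\bm{U}) \leq I(\bm{X};\bm{U},\bm{V})$. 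A symmetric argument handles the $\ell(\bm{v}_i,\bm{u}_i)$ half; averaging yields $\mathcal{J} \leq I(\bm{X};\bm{U},\bm{V})$.

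I expect the main obstacle to be stage one: making the InfoNCE lower-bound argument fully rigorous in this \emph{two-source-negative}, \emph{finite-sample} graph setting. Unlike the classical batch setting where negatives are i.i.d.\ draws, here the negatives are the other $N-1$ nodes of a single (augmented) graph, which are neither independent of the anchor nor identically distributed; so one must be explicit about what probability space the expectation in the bound is taken over (the uniform choice of node $i$ together with the random augmentations) and invoke the bound in the form that holds for arbitrary, possibly dependent, "negative" samples — essentially the observation that $\log \frac{e^{\theta(\bm{u}_i,\bm{v}_i)/\tau}}{\sum_j e^{\theta(\bm{u}_i,\bm{v}_j)/\tau}}$ averaged over $i$ is, by Jensen applied to the partition function, at most a Donsker–Varadhan / NWJ-type estimate of $I(\bm{U};\bm{V})$. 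The intra-view terms and the cosine-plus-projection critic $\theta$ play no role beyond being \emph{some} fixed measurable function, so they can be absorbed into the generic critic. Once that bookkeeping is done, the rest is the routine Markov-chain and chain-rule manipulation sketched above, and I would relegate the detailed constants to Appendix~\ref{appendix:proofs}.
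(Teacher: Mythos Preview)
Your proposal is correct and follows essentially the same route as the paper: first bound $\mathcal{J}$ by the symmetrized InfoNCE estimator by observing that the extra intra-view negatives only enlarge the denominator (the paper writes this as subtracting the nonnegative term $\mathbb{E}[\tfrac{1}{N}\sum_i\log(1+\rho_r(\bm u_i)/\rho_c(\bm u_i))]$), then invoke $I_{\text{NCE}}\le I(\bm U;\bm V)$ from Poole et al., and finally chain $I(\bm U;\bm V)\le I(\bm X;\bm U)\le I(\bm X;\bm U,\bm V)$ via the Markov structure $\bm U\leftarrow\bm X\rightarrow\bm V$. The only cosmetic difference is that the paper phrases the last step as data processing on $\bm X\to(\bm U,\bm V)\to\bm U$ rather than your chain-rule inequality $I(\bm X;\bm U,\bm V)=I(\bm X;\bm U)+I(\bm X;\bm V\mid\bm U)\ge I(\bm X;\bm U)$, and it does not engage with the dependence-of-negatives issue you flag (it simply cites the InfoNCE bound as a black box).
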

\begin{proof}[Proof sketch]
We first observe that our objective \(\mathcal{J}\) is a lower bound of the InfoNCE objective \cite{vandenOord:2018ut,Poole:2019vk}, defined by \(I_\text{NCE}(\bm U; \bm V) \triangleq \mathbb{E}_{\prod_i {p(\bm u_i, \bm v_i)}} \left[ \frac{1}{N} \sum_{i=1}^N \log \frac{e^{\theta(\bm{u}_i, \bm{v}_i)}}{\frac{1}{N}\sum_{j = 1}^{N} e^{\theta(\bm{u}_i, \bm{v}_j)}} \right] \). Since the InfoNCE estimator is a lower bound of the true MI, the theorem directly follows from the application of data processing inequality \cite{Cover:2006ei}, which states that \(I(\bm U; \bm V) \leq I(\bm X; \bm U, \bm V)\).
\end{proof}
\begin{remark}
Theorem \ref{thm:objective-InfoMax} reveals that maximizing \(\mathcal{J}\) is equivalent to explicitly maximizing a lower bound of the MI \(I(\bm X; \bm U, \bm V)\) between input node features and learned node representations.
Recent work further provides empirical evidence that optimizing a stricter bound of MI may not lead to better downstream performance on visual representation learning \cite{Tschannen:2020uo,Tian:2020vw}, which further highlights the importance of the design of data augmentation strategies. 

When optimizing \(I(\bm U; \bm V)\), a lower bound of \(I(\bm X; \bm U, \bm V)\), we encourage the model to encode shared information between the two views. From the amortized perspective, corrupted views will follow a skewed distribution where important link structures and features are emphasized. By contrasting the two views, the model is enforced to encode the emphasized information into representations, which improves embedding quality.

However, as the objective is not defined specifically on negative samples generated by the augmentation function, it remains challenging to derive the relationship between specific augmentation functions and the lower bound. We shall leave it for future work.
\end{remark}

\paragraph{Connections to the triplet loss.}
Alternatively, we may also view the optimization problem in Eq. (\ref{eq:overall-loss}) as a classical triplet loss, commonly used in deep metric learning.
\begin{theorem}
\label{thm:objective-triplet-loss}
When the projection function \(g\) is the identity function and we measure embedding similarity by simply taking the inner product, i.e. \(s(\bm{u}, \bm{v}) = \bm{u}^\top \bm{v}\), and further assuming that positive pairs are far more aligned than negative pairs, i.e. \(\bm{u}_i^\top \bm{v}_k \ll \bm{u}_i^\top \bm{v}_i\) and \(\bm{u}_i^\top \bm{u}_k \ll \bm{u}_i^\top \bm{v}_i\), minimizing the pairwise objective \(\ell(\bm{u}_i, \bm{v}_i)\) coincides with maximizing the triplet loss, as given in the sequel
\begin{equation}
\begin{split}
	& - \ell (\bm u_i, \bm v_i) \propto \\
	& 4N \tau + \sum_{j \neq i}\left( \| {\bm u_i} - {\bm v_i} \|^2 - \| {\bm u_i} - {\bm v_j} \|^2 + \| {\bm u_i} - {\bm v_i} \|^2 - \| {\bm u_i} - {\bm u_j} \|^2\right).
\end{split}
\end{equation}
\end{theorem}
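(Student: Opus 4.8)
The plan is to begin from the pairwise loss in Eq.~(\ref{eq:pairwise-loss}), specialize it to the stated assumptions, and then linearize its log-sum-exp structure with two successive first-order Taylor expansions, finally converting inner products into squared Euclidean distances through the polarization identity. Concretely, since $g$ is the identity and $\theta(\bm{u},\bm{v})=s(\bm{u},\bm{v})=\bm{u}^{\top}\bm{v}$, I would pull $e^{\bm{u}_i^{\top}\bm{v}_i/\tau}$ out of the denominator of Eq.~(\ref{eq:pairwise-loss}) and negate the logarithm, obtaining
\[
  -\ell(\bm{u}_i,\bm{v}_i)=\log\!\Bigl(1+\sum_{j\neq i}e^{(\bm{u}_i^{\top}\bm{v}_j-\bm{u}_i^{\top}\bm{v}_i)/\tau}+\sum_{j\neq i}e^{(\bm{u}_i^{\top}\bm{u}_j-\bm{u}_i^{\top}\bm{v}_i)/\tau}\Bigr).
\]
The hypotheses $\bm{u}_i^{\top}\bm{v}_j\ll\bm{u}_i^{\top}\bm{v}_i$ and $\bm{u}_i^{\top}\bm{u}_j\ll\bm{u}_i^{\top}\bm{v}_i$ make the argument of the logarithm close to $1$, so $\log(1+x)\approx x$ collapses it onto the bare sum of the $2(N-1)$ exponentials.

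Next I would apply $e^{y}\approx 1+y$ to each exponential and gather the $2(N-1)$ resulting constants, which yields
\[
  -\ell(\bm{u}_i,\bm{v}_i)\approx 2(N-1)+\frac1\tau\sum_{j\neq i}\bigl[(\bm{u}_i^{\top}\bm{v}_j-\bm{u}_i^{\top}\bm{v}_i)+(\bm{u}_i^{\top}\bm{u}_j-\bm{u}_i^{\top}\bm{v}_i)\bigr].
\]
Then, using $\bm{a}^{\top}\bm{b}=\tfrac12(\|\bm{a}\|^2+\|\bm{b}\|^2-\|\bm{a}-\bm{b}\|^2)$ together with the unit normalization of the embeddings --- which makes the residual squared-norm terms cancel once summed over $j$ --- each gap $\bm{u}_i^{\top}\bm{v}_j-\bm{u}_i^{\top}\bm{v}_i$ becomes $\tfrac12(\|\bm{u}_i-\bm{v}_i\|^2-\|\bm{u}_i-\bm{v}_j\|^2)$, and similarly $\bm{u}_i^{\top}\bm{u}_j-\bm{u}_i^{\top}\bm{v}_i=\tfrac12(\|\bm{u}_i-\bm{v}_i\|^2-\|\bm{u}_i-\bm{u}_j\|^2)$. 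Multiplying through by $2\tau>0$ and writing the constant $4(N-1)\tau$ as $4N\tau$ up to a lower-order additive term reproduces the displayed proportionality, exhibiting $-\ell(\bm{u}_i,\bm{v}_i)$ as moving in lockstep with the triplet-style expression $\sum_{j\neq i}(\|\bm{u}_i-\bm{v}_i\|^2-\|\bm{u}_i-\bm{v}_j\|^2+\|\bm{u}_i-\bm{v}_i\|^2-\|\bm{u}_i-\bm{u}_j\|^2)$ appearing in the statement.

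The main obstacle is less a technical difficulty than the need to be honest about the regime in which the proportionality holds: the two linearizations pull in opposite directions, since $\log(1+x)\approx x$ is licensed by the exponentials being tiny (their exponents large and negative), whereas $e^{y}\approx 1+y$ is accurate only when $y$ is near zero. I would therefore present the conclusion as a first-order, heuristic correspondence valid when the normalized similarity gaps $(\bm{u}_i^{\top}\bm{v}_i-\bm{u}_i^{\top}\bm{v}_j)/\tau$ are moderate, rather than as a rigorous bound. The only other point needing attention is the role of embedding norms: expressing everything purely through $\|\bm{u}_i-\bm{v}_i\|$, $\|\bm{u}_i-\bm{v}_j\|$ and $\|\bm{u}_i-\bm{u}_j\|$ depends on the polarization remainders vanishing, which I would justify by assuming $\ell_2$-normalized embeddings (consistent with the cosine-similarity critic actually used by the method) or, failing that, by absorbing those remainders together with the $O(\tau)$ and $O(1)$ constants into the proportionality.
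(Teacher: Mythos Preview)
Your proposal is correct and follows essentially the same route as the paper's proof: factor out the positive-pair term to rewrite $-\ell$ as a $\log(1+\cdot)$, apply $\log(1+x)\approx x$, then $e^{y}\approx 1+y$, and finally convert inner-product gaps to squared-distance gaps before multiplying through by $2\tau$. You are in fact more careful than the paper in two places---writing the constant after the exponential expansion as $2(N-1)$ rather than the paper's ``$2$'', and explicitly flagging both the unit-norm assumption needed for the polarization step and the tension between the two linearizations---but the skeleton of the argument is identical.
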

\begin{remark}
Theorem \ref{thm:objective-triplet-loss} draws connections between the objective and the classical triplet loss.
In other words, we may regard the problem in Eq. (\ref{eq:overall-loss}) as learning graph convolutional encoders to encourage positive samples being further away from negative samples in the embedding space.
Moreover, by viewing the objective from the metric learning perspective, we highlight the importance of appropriate data augmentation schemes, which is often neglected in previous InfoMax-based methods. Specifically, as the objective pulls together representation of each node in the two corrupted views, the model is enforced to encode information in the input graph that is insensitive to perturbation. Since the proposed adaptive augmentation schemes tend to keep important link structures and node attributes intact in the perturbation, the model is guided to encode essential structural and semantic information into the representation, which improves the quality of embeddings.
Last, the contrastive objective used in \themodel is cheap to optimize, since we do not have to generate negative samples explicitly and all computation can be performed in parallel. In contrast, the triplet loss is known to be computationally expensive \cite{Schroff:2015wo}.
\end{remark}

\section{Experiments}
\label{sec:experiments}

In this section, we conduct experiments to evaluate our model through answering the following questions.
\begin{itemize}
	\item \textbf{RQ1}. Does our proposed \themodel outperform existing baseline methods on node classification?
	\item \textbf{RQ2}. Do all proposed adaptive graph augmentation schemes benefit the learning of the proposed model? How does each graph augmentation scheme affect model performance?
	\item \textbf{RQ3}. Is the proposed model sensitive to hyperparameters? How do key hyperparameters impact the model performance?
\end{itemize}
We begin with a brief introduction of the experimental setup, and then we proceed to details of experimental results and their analysis.

\subsection{Experimental Setup}

\subsubsection{Datasets}

For comprehensive comparison, we use six widely-used datasets, including Wiki-CS, Amazon-Computers, Amazon-Photo, Coauthor-CS, and Coauthor-Physics, to study the performance of transductive node classification. The datasets are collected from real-world networks from different domains; their detailed statistics is summarized in Table \ref{tab:dataset-statistics}.
\begin{itemize}
	\item \textbf{Wiki-CS} \cite{Mernyei:2020wh} is a reference network constructed based on Wikipedia. The nodes correspond to articles about computer science and edges are hyperlinks between the articles. Nodes are labeled with ten classes each representing a branch of the field. Node features are calculated as the average of pretrained GloVe \cite{Pennington:2014kh} word embeddings of words in each article. 
	\item \textbf{Amazon-Computers} and \textbf{Amazon-Photo} \cite{Shchur:2018vv} are two networks of co-purchase relationships constructed from Amazon, where nodes are goods and two goods are connected when they are frequently bought together. Each node has a sparse bag-of-words feature encoding product reviews and is labeled with its category. 
	\item \textbf{Coauthor-CS} and \textbf{Coauthor-Physics} \cite{Shchur:2018vv} are two academic networks, which contain co-authorship graphs based on the Microsoft Academic Graph from the KDD Cup 2016 challenge. In these graphs, nodes represent authors and edges indicate co-authorship relationships; that is, two nodes are connected if they have co-authored a paper. Each node has a sparse bag-of-words feature based on paper keywords of the author. The label of an author corresponds to their most active research field.
\end{itemize}
Among these datasets, Wiki-CS has dense numerical features, while the other four datasets only contain sparse one-hot features.
For the Wiki-CS dataset, we evaluate the models on the public splits shipped with the dataset \cite{Mernyei:2020wh}.
Regarding the other four datasets, since they have no public splits available, we instead randomly split the datasets, where 10\%, 10\%, and the rest 80\% of nodes are selected for the training, validation, and test set, respectively.

\begin{table}[t]
\begin{threeparttable}
	\small
	\centering
	\caption{Statistics of datasets used in experiments.}
	\label{tab:dataset-statistics}
	\begin{tabular}{cccccc}
	\toprule
	Dataset & \#Nodes & \#Edges & \#Features & \#Classes \\
	\midrule
	Wiki-CS\tnotex{fn:wikics} & 11,701 & 216,123 & 300 & 10 \\
	Amazon-Computers\tnotex{fn:amazon-computers} & 13,752 & 245,861 & 767 & 10 \\
	Amazon-Photo\tnotex{fn:amazon-photo} & 7,650 & 119,081 & 745 & 8 \\
	Coauthor-CS\tnotex{fn:coauthor-cs} & 18,333 & 81,894 & 6,805 & 15 \\
	Coauthor-Physics\tnotex{fn:coauthor-phy} & 34,493 & 247,962 & 8,415 & 5 \\
	\bottomrule
	\end{tabular}
	\begin{tablenotes}[flushleft]
	\scriptsize{
		\item[1] \label{fn:wikics} \url{https://github.com/pmernyei/wiki-cs-dataset/raw/master/dataset}
		\item[2] \label{fn:amazon-computers} \url{https://github.com/shchur/gnn-benchmark/raw/master/data/npz/amazon_electronics_computers.npz}
		\item[3] \label{fn:amazon-photo} \url{https://github.com/shchur/gnn-benchmark/raw/master/data/npz/amazon_electronics_photo.npz}
		\item[4] \label{fn:coauthor-cs} \url{https://github.com/shchur/gnn-benchmark/raw/master/data/npz/ms_academic_cs.npz}
		\item[5] \label{fn:coauthor-phy} \url{https://github.com/shchur/gnn-benchmark/raw/master/data/npz/ms_academic_phy.npz}
	}
	\end{tablenotes}
\end{threeparttable}
\addtocounter{footnote}{+5}

\end{table}

\subsubsection{Evaluation protocol.}
For every experiment, we follow the linear evaluation scheme as introduced in \citet{Velickovic:2019tu}, where each model is firstly trained in an unsupervised manner; then, the resulting embeddings are used to train and test a simple \(\ell_2\)-regularized logistic regression classifier.
We train the model for twenty runs for different data splits and report the averaged performance on each dataset for fair evaluation.
Moreover, we measure performance in terms of accuracy in these experiments.

\subsubsection{Baselines.}
We consider representative baseline methods belonging to the following two categories: (1) traditional methods including DeepWalk \cite{Perozzi:2014ib} and node2vec \cite{Grover:2016ex} and (2) deep learning methods including Graph Autoencoders (GAE, VGAE) \cite{Kipf:2016ul}, Deep Graph Infomax (DGI) \cite{Velickovic:2019tu}, Graphical Mutual Information Maximization (GMI) \cite{Peng:2020gw}, and Multi-View Graph Representation Learning (MVGRL) \cite{Hassani:2020un}.
Furthermore, we report the performance obtained using a logistic regression classifier on raw node features and DeepWalk with embeddings concatenated with input node features.
To directly compare our proposed method with supervised counterparts, we also report the performance of two representative models Graph Convolutional Networks (GCN) \cite{Kipf:2016tc} and Graph Attention Networks (GAT) \cite{Velickovic:2018we}, where they are trained in an end-to-end fashion.
For all baselines, we report their performance based on their official implementations.

\subsubsection{Implementation details.}
We employ a two-layer GCN \cite{Kipf:2016tc} as the encoder for all deep learning baselines due to its simplicity. The encoder architecture is formally given by
\begin{align}
	\GC_i (\bm{X}, \bm{A}) & = \sigma \left( \hat{\bm{D}}^{-\frac{1}{2}} \hat {\bm{A}} \hat{\bm{D}}^{-\frac{1}{2}} \bm{X} \bm{W}_i \right), \\
	f(\bm X, \bm A) & = \GC_2 ( \GC_1 ( \bm{X}, \bm{A} ), \bm{A} ).
\end{align}
where \(\hat{\bm{A}} = \bm{A} + \bm{I}\) is the adjacency matrix with self-loops, \(\hat {\bm D} = \sum_i \hat{\bm{A}}_i\) is the degree matrix, \(\sigma(\cdot)\) is a nonlinear activation function, e.g., \(\operatorname{ReLU}(\cdot) = \max(0, \cdot)\), and \(\bm{W}_i\) is a trainable weight matrix.
For experimental specifications, including details of the configurations of the optimizer and hyperparameter settings, we refer readers of interest to Appendix \ref{appendix:implementation}.

\begin{table*}[t]
	\centering
	\caption{Summary of performance on node classification in terms of accuracy in percentage with standard deviation. Available data for each method during the training phase is shown in the second column, where \(\bm{X}, \bm{A}, \bm{Y}\) correspond to node features, the adjacency matrix, and labels respectively. The highest performance of unsupervised models is highlighted in boldface; the highest performance of supervised models is underlined. OOM indicates Out-Of-Memory on a 32GB GPU.}
	\label{tab:node-classification}
	\begin{tabular}{ccccccc}
	\toprule
	Method & Training Data & Wiki-CS & Amazon-Computers & Amazon-Photo & Coauthor-CS & Coauthor-Physics  \\
	\midrule
	Raw features & \(\bm{X}\) & 71.98 ± 0.00 & 73.81 ± 0.00 & 78.53 ± 0.00 & 90.37 ± 0.00 & 93.58 ± 0.00 \\
	node2vec & \(\bm{A}\) & 71.79 ± 0.05 & 84.39 ± 0.08 & 89.67 ± 0.12 & 85.08 ± 0.03 & 91.19 ± 0.04 \\
	DeepWalk & \(\bm{A}\) & 74.35 ± 0.06 & 85.68 ± 0.06 & 89.44 ± 0.11 & 84.61 ± 0.22 & 91.77 ± 0.15 \\
	DeepWalk + features & \(\bm{X}, \bm{A}\) & 77.21 ± 0.03 & 86.28 ± 0.07 & 90.05 ± 0.08 & 87.70 ± 0.04 & 94.90 ± 0.09 \\
	\midrule
	GAE & \(\bm{X}, \bm{A}\) & 70.15 ± 0.01 & 85.27 ± 0.19 & 91.62 ± 0.13 & 90.01 ± 0.71 & 94.92 ± 0.07 \\
	VGAE & \(\bm{X}, \bm{A}\) & 75.63 ± 0.19 & 86.37 ± 0.21 & 92.20 ± 0.11 & 92.11 ± 0.09 & 94.52 ± 0.00 \\
	DGI & \(\bm{X}, \bm{A}\) & 75.35 ± 0.14 & 83.95 ± 0.47 & 91.61 ± 0.22 & 92.15 ± 0.63 & 94.51 ± 0.52 \\
	GMI & \(\bm{X}, \bm{A}\) & 74.85 ± 0.08 & 82.21 ± 0.31 & 90.68 ± 0.17 & OOM & OOM \\
	MVGRL & \(\bm{X}, \bm{A}\) & 77.52 ± 0.08 & 87.52 ± 0.11 & 91.74 ± 0.07 & 92.11 ± 0.12 & 95.33 ± 0.03 \\
	\rowcolor{lightgray!50} \textbf{\themodel-DE} & \(\bm{X}, \bm{A}\) & 78.30 ± 0.00 & \textbf{87.85 ± 0.31} & 92.49 ± 0.09 & \textbf{93.10 ± 0.01} & 95.68 ± 0.05 \\
	\rowcolor{lightgray!50} \textbf{\themodel-PR} & \(\bm{X}, \bm{A}\) & \textbf{78.35 ± 0.05} & 87.80 ± 0.23 & \textbf{92.53 ± 0.16} & 93.06 ± 0.03 & 95.72 ± 0.03 \\
	\rowcolor{lightgray!50} \textbf{\themodel-EV} & \(\bm{X}, \bm{A}\) & 78.23 ± 0.04 & 87.54 ± 0.49 & 92.24 ± 0.21 & 92.95 ± 0.13 & \textbf{95.73 ± 0.03} \\
	\specialrule{0.5pt}{0.5pt}{1pt}
	\midrule
	GCN & \(\bm{X}, \bm{A}, \bm{Y}\) & 77.19 ± 0.12 & 86.51 ± 0.54 & 92.42 ± 0.22 & \underline{93.03 ± 0.31} & \underline{95.65 ± 0.16} \\
	GAT & \(\bm{X}, \bm{A}, \bm{Y}\) & \underline{77.65 ± 0.11} & \underline{86.93 ± 0.29} & \underline{92.56 ± 0.35} & 92.31 ± 0.24 & 95.47 ± 0.15 \\
	\bottomrule
	\end{tabular}
\end{table*}

\begin{table*}[t]
	\centering
	\caption{Performance of model variants on node classification in terms of accuracy in percentage with standard deviation. We use the degree centrality in all variants. The highest performance is highlighted in boldface.}
	\label{tab:ablation-study}
	\begin{tabular}{cccccccc}
	\toprule
	Variant & Topology & Attribute & Wiki-CS & Amazon-Computers & Amazon-Photo & Coauthor-CS & Coauthor-Physics \\
	\midrule
	\themodel--T--A & Uniform  & Uniform &  78.19 ± 0.01 & 86.25 ± 0.25 & 92.15 ± 0.24 & 92.93 ± 0.01 & 95.26 ± 0.02 \\
	\themodel--T & Uniform  & Adaptive & 78.23 ± 0.02 & 86.72 ± 0.49 & 92.20 ± 0.26 & 93.07 ± 0.01 & 95.59 ± 0.04 \\
	\themodel--A & Adaptive & Uniform & 78.25 ± 0.02 & 87.66 ± 0.30 & 92.23 ± 0.20 & 93.02 ± 0.01 & 95.54 ± 0.02 \\
	\rowcolor{lightgray!50} \textbf{\themodel} & Adaptive & Adaptive &  \textbf{78.30 ± 0.01} & \textbf{87.85 ± 0.31} & \textbf{92.49 ± 0.09} & \textbf{93.10 ± 0.01} & \textbf{95.68 ± 0.05} \\
	\bottomrule
	\end{tabular}
\end{table*}

\subsection{Performance on Node Classification (RQ1)}

The empirical performance is summarized in Table \ref{tab:node-classification}. Overall, from the table, we can see that our proposed model shows strong performance across all five datasets.
\themodel consistently performs better than unsupervised baselines by considerable margins on both transductive tasks. The strong performance verifies the superiority of the proposed contrastive learning framework.
On the two Coauthor datasets, we note that existing baselines have already obtained high enough performance; our method \themodel still pushes that boundary forward. 
Moreover, we particularly note that \themodel is competitive with models \emph{trained with label supervision} on all five datasets.

We make other observations as follows.
Firstly, the performance of traditional contrastive learning methods like DeepWalk is inferior to the simple logistic regression classifier that only uses raw features on some datasets (Coauthor-CS and Coauthor-Physics), which suggests that these methods may be ineffective in utilizing node features.
Unlike traditional work, we see that GCN-based methods, e.g., GAE, are capable of incorporating node features when learning embeddings. However, we note that on certain datasets (Wiki-CS), their performance is still worse than DeepWalk + feature, which we believe can be attributed to their na\"ive method of selecting negative samples that simply chooses contrastive pairs based on edges. This fact further demonstrates the important role of selecting negative samples based on augmented graph views in contrastive representation learning.
Moreover, compared to existing baselines DGI, GMI, and MVGRL, our proposed method performs strong, adaptive data augmentation in constructing negative samples, leading to better performance. Note that, although MVGRL employs diffusion to incorporate global information into augmented views, it still fails to consider the impacts of different edges adaptively on input graphs. The superior performance of \themodel verifies that our proposed adaptive data augmentation scheme is able to help improve embedding quality by preserving important patterns during perturbation. 

Secondly, we observe that all three variants with different node centrality measures of \themodel outperform existing contrastive baselines on all datasets.
We also notice that \themodel-DE and \themodel-PR with the degree and PageRank centrality respectively are two strong variants that achieve the best or competitive performance on all datasets.
Please kindly note that the result indicates that our model is not limited to specific choices of centrality measures and verifies the effectiveness and generality of our proposed framework.

In summary, the superior performance of \themodel compared to existing state-of-the-art methods verifies the effectiveness of our proposed \themodel framework that performs data augmentation adaptive to the graph structure and attributes.

\subsection{Ablation Studies (RQ2)}

In this section, we substitute the proposed topology and attribute level augmentation with their uniform counterparts to study the impact of each component of \themodel.
\themodel--T--A denotes the model with uniform topology and node attribute augmentation schemes, where the probabilities of dropping edge and masking features are set to the same for all nodes. The variants \themodel--T and \themodel--A are defined similarly except that we substitute the topology and the node attribute augmentation scheme with uniform sampling in the two models respectively. Degree centrality is used in all the variants for fair comparison. Please kindly note that the downgraded \themodel--T--A fallbacks to our preliminary work GRACE \cite{Zhu:2020vf}.

The results are presented in Table \ref{tab:ablation-study}, where we can see that both topology-level and node-attribute-level adaptive augmentation scheme improve model performance consistently on all datasets.
In addition, the combination of adaptive augmentation schemes on the two levels further benefits the performance. On the Amazon-Computers dataset, our proposed \themodel gains 1.5\% absolute improvement compared to the base model with no adaptive augmentation enabled. The results verify the effectiveness of our adaptive augmentation schemes on both topology and node attribute levels.

\subsection{Sensitivity Analysis (RQ3)}

\begin{figure}[t]
	\centering
	\includegraphics[width=\linewidth]{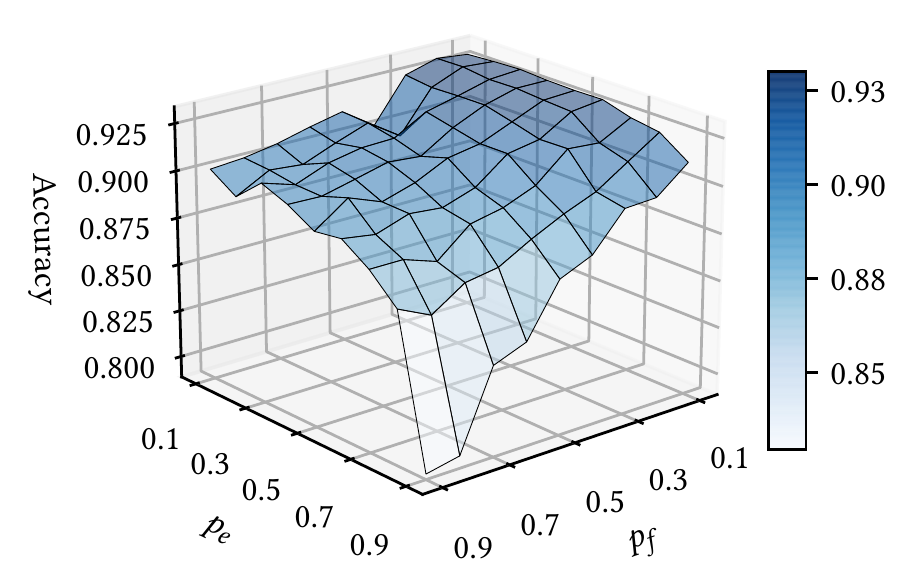}
	\caption{The performance of \themodel with varied different hyperparameters on the Amazon-Photo dataset in terms of node classification accuracy.}
	\label{fig:sensitivity-analysis}
\end{figure}

In this section, we perform sensitivity analysis on critical hyperparameters in \themodel, namely four probabilities \(p_{e,1},p_{f,1},p_{e,2}\), and \(p_{f,2}\) that determine the generation of graph views to show the stability of the model under perturbation of these hyperparameters.
We conduct transductive node classification by varying these parameters from 0.1 to 0.9. For sake of visualization brevity, we set \(p_e = p_{e,1} = p_{e,2}\) and \(p_f = p_{f,1} = p_{f,2}\) to control the magnitude of the proposed topology and node attribute level augmentation. We only change these four parameters in the sensitivity analysis, and other parameters remain the same as previously described.

The results on the Amazon-Photo dataset are shown in Figure \ref{fig:sensitivity-analysis}.
From the figure, it can be observed that the performance of node classification in terms of accuracy is relatively stable when the parameters are not too large, as shown in the plateau in the figure. We thus conclude that, overall, our model is insensitive to these probabilities, demonstrating the robustness to hyperparameter perturbation.
If the probability is set too large (e.g., \(> 0.5\)), the original graph will be heavily undermined. For example, when \(p_e = 0.9\), almost every existing edge has been removed, leading to isolated nodes in the generated graph views. Under such circumstances, the GNN is hard to learn useful information from node neighborhoods. Therefore, the learned node embeddings in the two graph views are not distinctive enough, which will result in the difficulty of optimizing the contrastive objective.

\section{Conclusion}
\label{sec:conclusion}

In this paper, we have developed a novel graph contrastive representation learning framework with adaptive augmentation.
Our model learns representation by maximizing the agreement of node embeddings between views that are generated by adaptive graph augmentation. The proposed adaptive augmentation scheme first identifies important edges and feature dimensions via network centrality measures. Then, on the topology level, we randomly remove edges by assigning large probabilities on unimportant edges to enforce the model to recognize network connectivity patterns. On the node attribute level, we corrupt attributes by adding more noise to unimportant feature dimensions to emphasize the underlying semantic information.
We have conducted comprehensive experiments using various real-world datasets. Experimental results demonstrate that our proposed \themodel method consistently outperforms existing state-of-the-art methods and even surpasses several supervised counterparts.

\begin{acks}
The authors would like to thank anonymous reviewers for their insightful comments.
The authors would also like to thank Mr. Tao Sun and Mr. Sirui Lu for their valuable discussion.
This work is jointly supported by National Natural Science Foundation of China (U19B2038, 61772528) and Beijing National Natural Science Foundation (4182066).
\end{acks}

\section*{Discussions on Broader Impact}
This paper presents a novel graph contrastive learning framework, and we believe it would be beneficial to the graph machine learning community both theoretically and practically.
Our proposed self-supervised graph representation learning techniques help alleviate the label scarcity issue when deploying machine learning applications in real-world, which saves a lot of efforts on human annotating. For example, our \themodel framework can be plugged into existing recommender systems and produces high-quality embeddings for users and items to resolve the cold start problem.
Note that our work mainly serves as a plug-in for existing machine learning models, it does not bring new ethical concerns. However, the \themodel model may still give biased outputs (e.g., gender bias, ethnicity bias), as the provided data itself may be strongly biased during the processes of the data collection, graph construction, etc.

\appendix
\section{Implementation Details}
\label{appendix:implementation}

\begin{table*}[t]
	\centering
	\caption{Hypeparameter specifications.}
    \begin{tabular}{ccccccccccc}
	\toprule
	Dataset & \(p_{e,1}\) & \(p_{e,2}\) & \(p_{f,1}\) & \(p_{f,2}\) & \(p_\tau\) & \(\tau\) & \makecell{Learning\\rate} & \makecell{Training\\epochs} & \makecell{Hidden\\dimension} & \makecell{Activation\\function} \\
	\midrule
	Wiki-CS  & 0.2   & 0.4   & 0.1 & 0.1  & 0.7 & 0.6 & 0.01 & 3,000   & 256  & PReLU \\
	Amazon-Computers & 0.5   & 0.5   & 0.2   & 0.1  & 0.7 & 0.1 & 0.01 & 1,500   & 128  & PReLU \\
	Amazon-Photo & 0.3   & 0.5   & 0.1   & 0.1  & 0.7 & 0.3 & 0.1 & 2,000  & 256   & ReLU \\
	Coauthor-CS  & 0.3   & 0.2   & 0.3   & 0.4  & 0.7 & 0.4 & 0.0005 & 1,000  & 256   & RReLU \\
	Coauthor-Physics  & 0.4   & 0.1   & 0.1   & 0.4  & 0.7 & 0.5 & 0.01 & 1,500  & 128   & RReLU \\
	\bottomrule
	\end{tabular}
	\label{tab:hyperparameters}
\end{table*}

\subsection{Computing Infrastructures}
\paragraph{Software infrastructures.}
All models are implemented using PyTorch Geometric 1.6.1 \cite{Fey:2019wv}, PyTorch 1.6.0 \cite{Paszke:2019vf}, and NetworkX 2.5 \cite{Hagberg:2008tk}. All datasets used throughout experiments are available in PyTorch Geometric libraries.
\paragraph{Hardware infrastructures.}
We conduct experiments on a computer server with four NVIDIA Tesla V100S GPUs (with 32GB memory each) and twelve Intel Xeon Silver 4214 CPUs.

\subsection{Hyperparameter Specifications}
All model parameters are initialized with Glorot initialization \cite{Glorot:2010uc}, and trained using Adam SGD optimizer \cite{Kingma:2015us} on all datasets.
The \(\ell_2\) weight decay factor is set to \(10^{-5}\) and the dropout rate \cite{Srivastava:2014cg} is set to zero on all datasets.
The probability parameters controlling the sampling process, \(p_{e,1}, p_{f,1}\) for the first view and \(p_{e,2}, p_{f,2}\) for the second view, are all selected between 0.0 and 0.4, since the original graph will be overly corrupted when the probability is set too large. Note that to generate different contexts for nodes in the two views, \(p_{e,1}\) and \(p_{e,2}\) should be distinct, and the same holds for \(p_{f,1}\) and \(p_{f,2}\).
All dataset-specific hyperparameter configurations are summarized in Table \ref{tab:hyperparameters}.

\section{Detailed Proofs}
\label{appendix:proofs}

\subsection{Proof of Theorem 1}

\begin{theorem}
\label{thm:objective-InfoMax}
Let \(\bm{X}_i = \{ \bm{x}_k \}_{k \in \mathcal{N}(i)}\) be the neighborhood of node \(v_i\) that collectively maps to its output embedding, where \(\mathcal{N}(i)\) denotes the set of neighbors of node \(v_i\) specified by GNN architectures, and \(\bm{X}\) be the corresponding random variable with a uniform distribution \(p(\bm{X}_i) = \nicefrac{1}{N}\). Given two random variables \(\bm{U, V} \in \mathbb{R}^{F^\prime}\) being the embedding in the two views, with their joint distribution denoted as \(p(\bm{U}, \bm{V})\), our objective \(\mathcal{J}\) is a lower bound of MI between encoder input \(\bm{X}\) and node representations in two graph views \(\bm{U, V}\). Formally,
\begin{equation}
	\mathcal{J} \leq I(\bm{X}; \bm{U}, \bm{V}).
\end{equation}
\end{theorem}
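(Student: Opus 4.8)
The plan is to establish the chain of inequalities
\begin{equation}
  \mathcal{J} \leq I_\text{NCE}(\bm{U}; \bm{V}) \leq I(\bm{U}; \bm{V}) \leq I(\bm{X}; \bm{U}, \bm{V}),
\end{equation}
where $I_\text{NCE}$ is the InfoNCE estimator recalled in the proof sketch of the main text. The first inequality is an algebraic manipulation of the contrastive loss; the second is the classical fact that InfoNCE lower-bounds mutual information; the third is an application of the data processing inequality once the correct Markov structure among $\bm{X}$, $\bm{U}$ and $\bm{V}$ is identified.

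\emph{Step 1 ($\mathcal{J} \leq I_\text{NCE}(\bm{U};\bm{V})$).} First I would observe that the denominator of $\ell(\bm{u}_i, \bm{v}_i)$ in Eq.~(\ref{eq:pairwise-loss}) contains the strictly positive intra-view term $\sum_{k \neq i} e^{\theta(\bm{u}_i, \bm{u}_k)/\tau}$; discarding it shrinks the denominator and hence enlarges the ratio, so
\begin{equation}
  \ell(\bm{u}_i, \bm{v}_i) \;\leq\; \log \frac{e^{\theta(\bm{u}_i, \bm{v}_i)/\tau}}{\sum_{k=1}^{N} e^{\theta(\bm{u}_i, \bm{v}_k)/\tau}} \;=\; \log \frac{e^{\theta(\bm{u}_i, \bm{v}_i)/\tau}}{\frac{1}{N}\sum_{k=1}^{N} e^{\theta(\bm{u}_i, \bm{v}_k)/\tau}} - \log N .
\end{equation}
Since $\tau>0$ is a fixed constant it can be absorbed into the critic $\theta$ without affecting the analysis; averaging the display over $i$ then produces exactly the InfoNCE estimator minus $\log N$, and the same holds for the symmetric terms $\ell(\bm{v}_i,\bm{u}_i)$ with the roles of the views exchanged. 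Reading $\mathcal{J}$ as its expectation under the joint sampling distribution $\prod_i p(\bm{u}_i,\bm{v}_i)$ of the two views (the quantity actually maximized in expectation by stochastic gradient ascent), both directional averages become $I_\text{NCE}(\bm{U};\bm{V}) - \log N$ and $I_\text{NCE}(\bm{V};\bm{U}) - \log N$; since $-\log N \le 0$ and each directional InfoNCE estimator lower-bounds the symmetric quantity $I(\bm{U};\bm{V})$, combining the two halves of Eq.~(\ref{eq:overall-loss}) gives $\mathcal{J} \le I(\bm{U};\bm{V})$, in particular $\mathcal{J}\le I_\text{NCE}(\bm{U};\bm{V})$.

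\emph{Steps 2--3 (from $I(\bm{U};\bm{V})$ to $I(\bm{X};\bm{U},\bm{V})$).} To close the argument I would make the Markov structure precise: the two views are generated by drawing $t,t'\sim\mathcal{T}$ independently and applying the shared encoder $f$, so conditioned on the neighborhood information $\bm{X}$ that the GNN maps to an embedding, $\bm{U}$ and $\bm{V}$ are independent, i.e. $\bm{U}-\bm{X}-\bm{V}$ is a Markov chain. The data processing inequality then yields $I(\bm{U};\bm{V}) \le I(\bm{X};\bm{U})$, and the chain rule $I(\bm{X};\bm{U},\bm{V}) = I(\bm{X};\bm{U}) + I(\bm{X};\bm{V}\mid\bm{U}) \ge I(\bm{X};\bm{U})$ (nonnegativity of conditional MI) completes the chain, giving $\mathcal{J} \le I(\bm{X};\bm{U},\bm{V})$.

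\emph{Main obstacle.} The delicate point is Step 1: $\mathcal{J}$ is a finite-sample average while $I_\text{NCE}$ is a population expectation, so the identification requires either treating $\mathcal{J}$ as an expectation or invoking the standard empirical-InfoNCE bound, and the bookkeeping with the temperature $\tau$, the compensating $\log N$, and the symmetrization of the two directional losses must be tracked so that no inequality reverses. A secondary subtlety is justifying the conditional independence $\bm{U}\perp\bm{V}\mid\bm{X}$: because the augmentations perturb the adjacency matrix as well as the features, one must take $\bm{X}$ to summarize \emph{all} information the encoder can use on node $v_i$ (as in the theorem's definition of $\bm{X}_i$), so that two independently sampled augmentations indeed render the resulting embeddings conditionally independent.
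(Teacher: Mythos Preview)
Your proposal is correct and follows essentially the same route as the paper: bound $\mathcal{J}$ by InfoNCE by dropping the nonnegative intra-view terms from the denominator (the paper writes this as subtracting $\log(1+\rho_r/\rho_c)\ge 0$), then invoke $I_\text{NCE}\le I$ and the data processing inequality via the Markov structure $\bm U\!-\!\bm X\!-\!\bm V$; for the last step you use the chain rule $I(\bm X;\bm U,\bm V)=I(\bm X;\bm U)+I(\bm X;\bm V\mid\bm U)\ge I(\bm X;\bm U)$ where the paper instead applies DPI a second time to $\bm X\to(\bm U,\bm V)\to\bm U$, which is equivalent. One small slip: the clause ``in particular $\mathcal{J}\le I_\text{NCE}(\bm U;\bm V)$'' at the end of Step~1 does not follow from $\mathcal{J}\le I(\bm U;\bm V)$ (the implication goes the wrong way since $I_\text{NCE}\le I$); what you actually established, and all you need, is $\mathcal{J}\le\tfrac12\bigl(I_\text{NCE}(\bm U;\bm V)+I_\text{NCE}(\bm V;\bm U)\bigr)\le I(\bm U;\bm V)$.
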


\begin{proof}
We first show the connection between our objective \(\mathcal{J}\) and the InfoNCE objective \cite{vandenOord:2018ut,Poole:2019vk} , which is defined as
\[I_\text{NCE}(\bm U; \bm V) \triangleq \mathbb{E}_{\prod_{i} p( {\bm u}_i, {\bm v}_i)} \left[ \frac{1}{N} \sum_{i=1}^N \log \frac{e^{\theta(\bm{u}_i, \bm{v}_i)}}{\frac{1}{N}\sum_{j = 1}^{N} e^{\theta(\bm{u}_i, \bm{v}_j)}} \right], \]
where the critic function is defined as \(\theta (\bm{x}, \bm{y}) = s(g(\bm{x}), g(\bm{y}))\).
We further define \(\rho_r({\bm{u}}_i) = \sum_{j \neq i}^N \exp(\theta({\bm u}_i, {\bm u}_j) / \tau)\) and \(\rho_c({\bm u}_i) = \sum_{j=1}^N \exp(\theta ({\bm u}_i, {\bm v}_j) / \tau)\) for convenience of notation.
\(\rho_r({\bm v}_i)\) and \(\rho_c({\bm v}_i)\) can be defined symmetrically.
Then, our objective \(\mathcal{J}\) can be rewritten as
\begin{equation}
	\mathcal{J} = \mathbb E_{\prod_{i} p( {\bm u}_i, {\bm v}_i)} \left[ \frac{1}{N} \sum_{i=1}^N \log \frac {\exp(\theta( {\bm u}_i, {\bm v}_i) / \tau)} {\sqrt{\left( \rho_c( {\bm u}_i) + \rho_r({\bm u}_i) \right) \left( \rho_c( {\bm v}_i) + \rho_r({\bm v}_i) \right)}} \right].
\end{equation}
Using the notation of \(\rho_c\), the InfoNCE estimator \(I_\text{NCE}\) can be written as
\begin{equation}
	I_\text{NCE}( {\bm U}, {\bm V}) = \mathbb E_{\prod_{i} p( {\bm u}_i, {\bm v}_i)} \left[ \frac{1}{N} \sum_{i=1}^N \log \frac {\exp(\theta( {\bm u}_i, {\bm v}_i) / \tau)} {\rho_{c}( {\bm u}_i)} \right].
\end{equation}
Therefore,
\begin{equation}
	\begin{aligned}
		2\mathcal{J} & = I_\text{NCE}(\bm U, \bm V) - \mathbb E_{\prod_i p( {\bm u}_i,  {\bm v}_i)} \left[ \frac{1}{N} \sum_{i=1}^N \log \left( 1 + \frac {\rho_r( {\bm u}_i)} {\rho_c( {\bm u}_i)} \right) \right] \\
		& \quad + I_\text{NCE}(\bm V, \bm U) - \mathbb E_{\prod_i p( {\bm u}_i, {\bm v}_i)} \left[ \frac{1}{N} \sum_{i=1}^N \log \left( 1 + \frac {\rho_r( {\bm v}_i)} {\rho_c( {\bm v}_i)} \right) \right] \\
		& \leq I_\text{NCE}(\bm U, \bm V) + I_\text{NCE}(\bm V, \bm U). \\
	\end{aligned}
\end{equation}
According to \citet{Poole:2019vk}, the InfoNCE estimator is a lower bound of the true MI, i.e.
\begin{equation}
	I_\text{NCE}(\bm{U}, \bm{V}) \le I(\bm{U}; \bm{V}).
\end{equation}
Thus, we arrive at
\begin{equation}
	 2\mathcal{J} \leq I(\bm U; \bm V) + I(\bm V; \bm U) = 2 I(\bm U; \bm V),
\end{equation}
which leads to the inequality
\begin{equation}
\label{eq:objective-uv}
\mathcal J \le I( {\bm U}; {\bm V}).
\end{equation}

According to the data processing inequality \cite{Cover:2006ei}, which states that, for all random variables \(\bm{X}, \bm{Y}, \bm{Z}\) satisfying the Markov relation \(\bm{X} \rightarrow \bm{Y} \rightarrow \bm{Z}\), the inequality \(I(\bm{X}; \bm{Z}) \leq I(\bm{X}; \bm{Y})\) holds.
Then, we observe that \(\bm{X}, \bm{U}, \bm{V}\) satisfy the relation \(\bm{U} \leftarrow \bm{X} \rightarrow \bm{V}\). Since, \(\bm{U}\) and \(\bm{V}\) are conditionally independent after observing \(\bm{X}\), the relation is Markov equivalent to \(\bm{U} \rightarrow \bm{X} \rightarrow \bm{V}\), which leads to \(I(\bm{U}; \bm{V}) \leq I(\bm{U}; \bm{X})\).
We further notice that the relation \(\bm{X} \rightarrow (\bm{U}, \bm{V}) \rightarrow \bm{U}\) holds, and hence it follows that \(I(\bm{X}; \bm{U}) \leq I(\bm{X}; \bm{U}, \bm{V})\).
Combining the two inequalities yields the required inequality
\begin{equation}
	\label{eq:data-processing}
	I(\bm U; \bm V) \leq I(\bm X; \bm U, \bm V).
\end{equation}
Following Eq. (\ref{eq:objective-uv}) and Eq. (\ref{eq:data-processing}), we finally arrive at inequality
\begin{equation}
	\mathcal{J} \leq I(\bm X; \bm U, \bm V),
\end{equation}
which concludes the proof.
\end{proof}

\subsection{Proof of Theorem 2}

\begin{theorem}
\label{thm:objective-triplet-loss}
When the projection function \(g\) is the identity function and we measure embedding similarity by simply taking inner product, and further assuming that positive pairs are far more aligned than negative pairs, i.e. \(\bm{u}_i^\top \bm{v}_k \ll \bm{u}_i^\top \bm{v}_i\) and \(\bm{u}_i^\top \bm{u}_k \ll \bm{u}_i^\top \bm{v}_i\), minimizing the pairwise objective \(\ell(\bm u_i, \bm v_i)\) coincides with maximizing the triplet loss, as given in the sequel
\begin{equation}
	\begin{split}
		- \ell (\bm u_i, \bm v_i) \propto 
		4N \tau + \sum_{j \neq i}\Bigg[ &\left(\| {\bm u_i} - {\bm v_i} \|^2 - \| {\bm u_i} - {\bm v_j} \|^2\right)\\
		&+ \left(\| {\bm u_i} - {\bm v_i} \|^2 - \| {\bm u_i} - {\bm u_j} \|^2\right) \Bigg].
	\end{split}
\end{equation}
\end{theorem}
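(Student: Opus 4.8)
The plan is to collapse the log-softmax defining $\ell(\bm{u}_i,\bm{v}_i)$ into a sum of squared-distance differences by two successive first-order expansions. First I would use the hypotheses to set $\theta(\bm{u},\bm{v})=\bm{u}^\top\bm{v}$, and divide the numerator and denominator of the softmax in Eq.~(\ref{eq:pairwise-loss}) by the positive term $e^{\bm{u}_i^\top\bm{v}_i/\tau}$, obtaining
\[
-\ell(\bm{u}_i,\bm{v}_i)=\log\!\left(1+\sum_{k\neq i}e^{(\bm{u}_i^\top\bm{v}_k-\bm{u}_i^\top\bm{v}_i)/\tau}+\sum_{k\neq i}e^{(\bm{u}_i^\top\bm{u}_k-\bm{u}_i^\top\bm{v}_i)/\tau}\right),
\]
which isolates all the alignment information inside the two exponential sums.

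Next I would invoke the assumption that negatives are far less aligned than the positive, $\bm{u}_i^\top\bm{v}_k\ll\bm{u}_i^\top\bm{v}_i$ and $\bm{u}_i^\top\bm{u}_k\ll\bm{u}_i^\top\bm{v}_i$, and apply $\log(1+x)\approx x$ followed by the termwise expansion $e^{x}\approx 1+x$. The $2(N-1)$ unit terms contribute a constant and the linear parts assemble into
\[
-\ell(\bm{u}_i,\bm{v}_i)\;\approx\;2(N-1)+\frac{1}{\tau}\sum_{k\neq i}\Big(\bm{u}_i^\top\bm{v}_k+\bm{u}_i^\top\bm{u}_k-2\,\bm{u}_i^\top\bm{v}_i\Big).
\]
I would then rewrite each inner product via the polarization identity $2\bm{a}^\top\bm{b}=\|\bm{a}\|^2+\|\bm{b}\|^2-\|\bm{a}-\bm{b}\|^2$, so that $\bm{u}_i^\top\bm{v}_k-\bm{u}_i^\top\bm{v}_i=\tfrac{1}{2}\big(\|\bm{u}_i-\bm{v}_i\|^2-\|\bm{u}_i-\bm{v}_k\|^2\big)+\tfrac{1}{2}\big(\|\bm{v}_k\|^2-\|\bm{v}_i\|^2\big)$, and analogously for the intra-view terms. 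Treating the purely norm-dependent residuals $\|\bm{v}_k\|^2-\|\bm{v}_i\|^2$ and $\|\bm{u}_k\|^2-\|\bm{v}_i\|^2$ as constants---they sum to zero under the usual normalization of the embeddings---and multiplying through by the positive constant $2\tau$, the leading $2(N-1)$ becomes $4N\tau$ up to an additive constant, and one recovers exactly the stated triplet expression, with $\propto$ absorbing the factor $2\tau$ together with that additive constant.

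I expect the main obstacle to be the honesty of the two linearizations rather than any algebra: making each exponential close to $1$ (so that $e^{x}\approx 1+x$) does not by itself make the aggregate $R=\sum_{k\neq i}(\cdots)$ small (so that $\log(1+R)\approx R$) once $N$ is large, so I would present the identity as exposing the structural form of the objective in the well-separated regime rather than as a rigorous limit. I would also state explicitly the embedding-norm bookkeeping used to discard the $\|\bm{v}_k\|^2-\|\bm{v}_i\|^2$ residuals. Everything else---the softmax rewrite, the two Taylor steps, and the polarization identity---is routine algebra.
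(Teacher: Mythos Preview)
Your proposal is correct and follows essentially the same route as the paper's proof: factor out the positive term, apply $\log(1+x)\approx x$ then $e^{x}\approx 1+x$, and use polarization to convert inner-product differences into squared-distance differences. You are in fact more careful than the paper on two points---you track the constant as $2(N-1)$ (the paper writes ``$2$'' but then inconsistently arrives at $4N\tau$), and you explicitly address the norm residuals $\|\bm{v}_k\|^2-\|\bm{v}_i\|^2$ that the paper drops silently.
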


\begin{proof}
Based on the assumptions, we can rearrange the pairwise objective as
\begin{equation}
	\begin{aligned}
		- \ell(\bm{u}_i, \bm{v}_i) & = - \log \frac {e^{\left( \bm{u}_i^\top \bm{v}_{i} / \tau\right)}} {\sum_{k=1}^{N} e^{\left( \bm{u}_i^\top \bm{v}_k / \tau\right)} + \sum_{k \neq i}^{N} e^{\left( \bm{u}_i^\top \bm{u}_k / \tau\right)}} \\
		& = \log \left( 1 + \sum_{k \neq i}^{N} e^{\left( \frac { {\bm u}_i^\top {\bm v}_k - {\bm u}_i^\top {\bm v}_i} {\tau} \right)} + \sum_{k \neq i}^{N} e^{\left( \frac { {\bm u}_i^\top {\bm u}_k - {\bm u}_i^\top {\bm v}_i} {\tau} \right)} \right).
	\end{aligned}
\end{equation}
By Taylor expansion of first order,
\begin{equation}
	\begin{aligned}
	& \quad - \ell( {\bm u}_i, {\bm v}_i) \\
	& \approx \sum_{k \neq i}^{N} \exp\left( \frac { {\bm u}_i^\top {\bm v}_k - {\bm u}_i^\top {\bm v}_i} {\tau} \right) + \sum_{k \neq i}^{N} \exp\left( \frac { {\bm u}_i^\top {\bm u}_k - {\bm u}_i^\top {\bm v}_i} {\tau} \right) \\
	& \approx 2 + \frac 1 \tau \left[ \sum_{k \neq i}^{N} (  {\bm u}_i^\top {\bm v}_k - {\bm u}_i^\top {\bm v}_i) + \sum_{k \neq i}^{N} ( {\bm u}_i^\top {\bm u}_k - {\bm u}_i^\top {\bm v}_i) \right] \\
	& = 2 - \frac 1 {2 \tau}\sum_{k \neq i}^{N} \left( \| {\bm u_i} - {\bm v_k} \|^2 - \| {\bm u_i} - {\bm v_i} \|^2 + \| {\bm u_i} - {\bm u_k} \|^2 - \| {\bm u_i} - {\bm v_i} \|^2 \right) \\
	& \propto 4N\tau + \sum_{k \neq i}^{N} \left(\| {\bm u_i} - {\bm v_i} \|^2 - \| {\bm u_i} - {\bm v_k} \|^2 + \| {\bm u_i} - {\bm v_i} \|^2 - \| {\bm u_i} - {\bm u_k} \|^2\right) ,
	\end{aligned}
\end{equation}
which concludes the proof.
\end{proof}

\bibliographystyle{ACM-Reference-Format}
\bibliography{www2021}


\begin{thebibliography}{53}


\ifx \showCODEN    \undefined \def \showCODEN     #1{\unskip}     \fi
\ifx \showDOI      \undefined \def \showDOI       #1{#1}\fi
\ifx \showISBNx    \undefined \def \showISBNx     #1{\unskip}     \fi
\ifx \showISBNxiii \undefined \def \showISBNxiii  #1{\unskip}     \fi
\ifx \showISSN     \undefined \def \showISSN      #1{\unskip}     \fi
\ifx \showLCCN     \undefined \def \showLCCN      #1{\unskip}     \fi
\ifx \shownote     \undefined \def \shownote      #1{#1}          \fi
\ifx \showarticletitle \undefined \def \showarticletitle #1{#1}   \fi
\ifx \showURL      \undefined \def \showURL       {\relax}        \fi
\providecommand\bibfield[2]{#2}
\providecommand\bibinfo[2]{#2}
\providecommand\natexlab[1]{#1}
\providecommand\showeprint[2][]{arXiv:#2}

\bibitem[\protect\citeauthoryear{Bachman, Hjelm, and Buchwalter}{Bachman
  et~al\mbox{.}}{2019}]%
        {Bachman:2019wp}
\bibfield{author}{\bibinfo{person}{Philip Bachman}, \bibinfo{person}{R.~Devon
  Hjelm}, {and} \bibinfo{person}{William Buchwalter}.}
  \bibinfo{year}{2019}\natexlab{}.
\newblock \showarticletitle{{Learning Representations by Maximizing Mutual
  Information Across Views}}. In \bibinfo{booktitle}{\emph{Advances in Neural
  Information Processing Systems 32}}. \bibinfo{pages}{15509--15519}.
\newblock


\bibitem[\protect\citeauthoryear{Bonacich}{Bonacich}{1987}]%
        {Bonacich:1987up}
\bibfield{author}{\bibinfo{person}{Phillip Bonacich}.}
  \bibinfo{year}{1987}\natexlab{}.
\newblock \showarticletitle{{Power and Centrality: A Family of Measures}}.
\newblock \bibinfo{journal}{\emph{Amer. J. Sociology}} \bibinfo{volume}{92},
  \bibinfo{number}{5} (\bibinfo{date}{March} \bibinfo{year}{1987}),
  \bibinfo{pages}{1170--1182}.
\newblock


\bibitem[\protect\citeauthoryear{Chen, Kornblith, Norouzi, and Hinton}{Chen
  et~al\mbox{.}}{2020}]%
        {Chen:2020wj}
\bibfield{author}{\bibinfo{person}{Ting Chen}, \bibinfo{person}{Simon
  Kornblith}, \bibinfo{person}{Mohammad Norouzi}, {and}
  \bibinfo{person}{Geoffrey Hinton}.} \bibinfo{year}{2020}\natexlab{}.
\newblock \showarticletitle{{A Simple Framework for Contrastive Learning of
  Visual Representations}}. In \bibinfo{booktitle}{\emph{Proceedings of the
  37th International Conference on Machine Learning}},
  Vol.~\bibinfo{volume}{119}. \bibinfo{publisher}{PMLR},
  \bibinfo{pages}{10709--10719}.
\newblock


\bibitem[\protect\citeauthoryear{Collobert and Weston}{Collobert and
  Weston}{2008}]%
        {Weston:2008kg}
\bibfield{author}{\bibinfo{person}{Ronan Collobert} {and}
  \bibinfo{person}{Jason Weston}.} \bibinfo{year}{2008}\natexlab{}.
\newblock \showarticletitle{{A Unified Architecture for Natural Language
  Processing: Deep Neural Networks with Multitask Learning}}. In
  \bibinfo{booktitle}{\emph{Proceedings of the 25th International Conference on
  Machine Learning}}. \bibinfo{publisher}{ACM Press},
  \bibinfo{pages}{160--167}.
\newblock


\bibitem[\protect\citeauthoryear{Cover and Thomas}{Cover and Thomas}{2006}]%
        {Cover:2006ei}
\bibfield{author}{\bibinfo{person}{Thomas~M. Cover} {and}
  \bibinfo{person}{Joy~A. Thomas}.} \bibinfo{year}{2006}\natexlab{}.
\newblock \bibinfo{booktitle}{\emph{{Elements of Information Theory (Second
  Edition)}}}.
\newblock \bibinfo{publisher}{Wiley-Interscience}, \bibinfo{address}{USA}.
\newblock
\showISBNx{0471241954}


\bibitem[\protect\citeauthoryear{Falcon and Cho}{Falcon and Cho}{2020}]%
        {Falcon:2020uv}
\bibfield{author}{\bibinfo{person}{William Falcon} {and}
  \bibinfo{person}{Kyunghyun Cho}.} \bibinfo{year}{2020}\natexlab{}.
\newblock \showarticletitle{{A Framework For Contrastive Self-Supervised
  Learning and Designing A New Approach}}.
\newblock \bibinfo{journal}{\emph{arXiv.org}} (\bibinfo{date}{Sept.}
  \bibinfo{year}{2020}).
\newblock
\showeprint[arxiv]{2009.00104v1}~[cs.CV]


\bibitem[\protect\citeauthoryear{Fey and Lenssen}{Fey and Lenssen}{2019}]%
        {Fey:2019wv}
\bibfield{author}{\bibinfo{person}{Matthias Fey} {and}
  \bibinfo{person}{Jan~Eric Lenssen}.} \bibinfo{year}{2019}\natexlab{}.
\newblock \showarticletitle{{Fast Graph Representation Learning with PyTorch
  Geometric}}. In \bibinfo{booktitle}{\emph{ICLR Workshop on Representation
  Learning on Graphs and Manifolds}}.
\newblock


\bibitem[\protect\citeauthoryear{Gidaris, Singh, and Komodakis}{Gidaris
  et~al\mbox{.}}{2018}]%
        {Gidaris:2018wr}
\bibfield{author}{\bibinfo{person}{Spyros Gidaris}, \bibinfo{person}{Praveer
  Singh}, {and} \bibinfo{person}{Nikos Komodakis}.}
  \bibinfo{year}{2018}\natexlab{}.
\newblock \showarticletitle{{Unsupervised Representation Learning by Predicting
  Image Rotations}}. In \bibinfo{booktitle}{\emph{Proceedings of the 6th
  International Conference on Learning Representations}}.
\newblock


\bibitem[\protect\citeauthoryear{Glorot and Bengio}{Glorot and Bengio}{2010}]%
        {Glorot:2010uc}
\bibfield{author}{\bibinfo{person}{Xavier Glorot} {and} \bibinfo{person}{Yoshua
  Bengio}.} \bibinfo{year}{2010}\natexlab{}.
\newblock \showarticletitle{{Understanding the Difficulty of Training Deep
  Feedforward Neural Networks}}. In \bibinfo{booktitle}{\emph{Proceedings of
  the Thirteenth International Conference on Artificial Intelligence and
  Statistics}}. \bibinfo{publisher}{JMLR.org}, \bibinfo{pages}{249--256}.
\newblock


\bibitem[\protect\citeauthoryear{Gonzalez and Woods}{Gonzalez and
  Woods}{2018}]%
        {Gonzalez:2018dp}
\bibfield{author}{\bibinfo{person}{Rafael~C. Gonzalez} {and}
  \bibinfo{person}{Richard~E. Woods}.} \bibinfo{year}{2018}\natexlab{}.
\newblock \bibinfo{booktitle}{\emph{{Digital Image Processing (Fourth
  Edition)}}}.
\newblock \bibinfo{publisher}{Pearson}, \bibinfo{address}{USA}.
\newblock
\showISBNx{9780133356724}


\bibitem[\protect\citeauthoryear{Grover and Leskovec}{Grover and
  Leskovec}{2016}]%
        {Grover:2016ex}
\bibfield{author}{\bibinfo{person}{Aditya Grover} {and} \bibinfo{person}{Jure
  Leskovec}.} \bibinfo{year}{2016}\natexlab{}.
\newblock \showarticletitle{{node2vec: Scalable Feature Learning for
  Networks}}. In \bibinfo{booktitle}{\emph{{Proceedings of the 22nd {ACM}
  {SIGKDD} International Conference on Knowledge Discovery and Data Mining}}}.
  \bibinfo{publisher}{ACM}, \bibinfo{pages}{855--864}.
\newblock


\bibitem[\protect\citeauthoryear{Gutmann and Hyv{\"a}rinen}{Gutmann and
  Hyv{\"a}rinen}{2012}]%
        {Gutmann:2012eq}
\bibfield{author}{\bibinfo{person}{Michael Gutmann} {and} \bibinfo{person}{Aapo
  Hyv{\"a}rinen}.} \bibinfo{year}{2012}\natexlab{}.
\newblock \showarticletitle{{Noise-Contrastive Estimation of Unnormalized
  Statistical Models, with Applications to Natural Image Statistics}}.
\newblock \bibinfo{journal}{\emph{Journal of Machine Learning Research}}
  \bibinfo{volume}{13} (\bibinfo{year}{2012}), \bibinfo{pages}{307--361}.
\newblock


\bibitem[\protect\citeauthoryear{Hagberg, Schult, and Swart}{Hagberg
  et~al\mbox{.}}{2008}]%
        {Hagberg:2008tk}
\bibfield{author}{\bibinfo{person}{Aric~A. Hagberg}, \bibinfo{person}{Daniel~A.
  Schult}, {and} \bibinfo{person}{Pieter~J. Swart}.}
  \bibinfo{year}{2008}\natexlab{}.
\newblock \showarticletitle{{Exploring Network Structure, Dynamics, and
  Function using NetworkX}}. In \bibinfo{booktitle}{\emph{Proceedings of the
  7th Python in Science Conference}}. \bibinfo{pages}{11--15}.
\newblock


\bibitem[\protect\citeauthoryear{Hamilton, Ying, and Leskovec}{Hamilton
  et~al\mbox{.}}{2017a}]%
        {Hamilton:2017wa}
\bibfield{author}{\bibinfo{person}{William~L. Hamilton}, \bibinfo{person}{Rex
  Ying}, {and} \bibinfo{person}{Jure Leskovec}.}
  \bibinfo{year}{2017}\natexlab{a}.
\newblock \showarticletitle{{Representation Learning on Graphs: Methods and
  Applications}}.
\newblock \bibinfo{journal}{\emph{Bulletin of the IEEE Computer Society
  Technical Committee on Data Engineering}} \bibinfo{volume}{40},
  \bibinfo{number}{3} (\bibinfo{year}{2017}), \bibinfo{pages}{52--74}.
\newblock


\bibitem[\protect\citeauthoryear{Hamilton, Ying, and Leskovec}{Hamilton
  et~al\mbox{.}}{2017b}]%
        {Hamilton:2017tp}
\bibfield{author}{\bibinfo{person}{William~L. Hamilton},
  \bibinfo{person}{Zhitao Ying}, {and} \bibinfo{person}{Jure Leskovec}.}
  \bibinfo{year}{2017}\natexlab{b}.
\newblock \showarticletitle{{Inductive Representation Learning on Large
  Graphs}}. In \bibinfo{booktitle}{\emph{Advances in Neural Information
  Processing Systems 30}}. \bibinfo{pages}{1024--1034}.
\newblock


\bibitem[\protect\citeauthoryear{Hassani and Khasahmadi}{Hassani and
  Khasahmadi}{2020}]%
        {Hassani:2020un}
\bibfield{author}{\bibinfo{person}{Kaveh Hassani} {and}
  \bibinfo{person}{Amir~Hosein Khasahmadi}.} \bibinfo{year}{2020}\natexlab{}.
\newblock \showarticletitle{{Contrastive Multi-View Representation Learning on
  Graphs}}. In \bibinfo{booktitle}{\emph{Proceedings of the 37th International
  Conference on Machine Learning}} \emph{(\bibinfo{series}{Proceedings of
  Machine Learning Research}, Vol.~\bibinfo{volume}{119})}.
  \bibinfo{publisher}{PMLR}, \bibinfo{pages}{3451--3461}.
\newblock


\bibitem[\protect\citeauthoryear{He, Fan, Wu, Xie, and Girshick}{He
  et~al\mbox{.}}{2020}]%
        {He:2020tu}
\bibfield{author}{\bibinfo{person}{Kaiming He}, \bibinfo{person}{Haoqi Fan},
  \bibinfo{person}{Yuxin Wu}, \bibinfo{person}{Saining Xie}, {and}
  \bibinfo{person}{Ross Girshick}.} \bibinfo{year}{2020}\natexlab{}.
\newblock \showarticletitle{{Momentum Contrast for Unsupervised Visual
  Representation Learning}}. In \bibinfo{booktitle}{\emph{{Proceedings of the
  2020 {IEEE/CVF} Conference on Computer Vision and Pattern Recognition}}}.
  \bibinfo{publisher}{IEEE}, \bibinfo{pages}{9726--9735}.
\newblock


\bibitem[\protect\citeauthoryear{H{\'e}naff, Srinivas, De~Fauw, Razavi,
  Doersch, Eslami, and van~den Oord}{H{\'e}naff et~al\mbox{.}}{2020}]%
        {Henaff:2020ta}
\bibfield{author}{\bibinfo{person}{Olivier~J. H{\'e}naff},
  \bibinfo{person}{Aravind Srinivas}, \bibinfo{person}{Jeffrey De~Fauw},
  \bibinfo{person}{Ali Razavi}, \bibinfo{person}{Carl Doersch},
  \bibinfo{person}{S.~M.~Ali Eslami}, {and} \bibinfo{person}{A{\"a}ron van~den
  Oord}.} \bibinfo{year}{2020}\natexlab{}.
\newblock \showarticletitle{{Data-Efficient Image Recognition with Contrastive
  Predictive Coding}}. In \bibinfo{booktitle}{\emph{Proceedings of the 37th
  International Conference on Machine Learning}}
  \emph{(\bibinfo{series}{Proceedings of Machine Learning Research},
  Vol.~\bibinfo{volume}{119})}. \bibinfo{publisher}{PMLR},
  \bibinfo{pages}{4182--4192}.
\newblock


\bibitem[\protect\citeauthoryear{Hjelm, Fedorov, Lavoie-Marchildon, Grewal,
  Bachman, Trischler, and Bengio}{Hjelm et~al\mbox{.}}{2019}]%
        {Hjelm:2019uk}
\bibfield{author}{\bibinfo{person}{R.~Devon Hjelm}, \bibinfo{person}{Alex
  Fedorov}, \bibinfo{person}{Samuel Lavoie-Marchildon}, \bibinfo{person}{Karan
  Grewal}, \bibinfo{person}{Philip Bachman}, \bibinfo{person}{Adam Trischler},
  {and} \bibinfo{person}{Yoshua Bengio}.} \bibinfo{year}{2019}\natexlab{}.
\newblock \showarticletitle{{Learning Deep Representations by Mutual
  Information Estimation and Maximization}}. In
  \bibinfo{booktitle}{\emph{Proceedings of the 7th International Conference on
  Learning Representations}}.
\newblock


\bibitem[\protect\citeauthoryear{Hu, Zhu, Wu, Wang, and Tan}{Hu
  et~al\mbox{.}}{2019}]%
        {Hu:2019vq}
\bibfield{author}{\bibinfo{person}{Fenyu Hu}, \bibinfo{person}{Yanqiao Zhu},
  \bibinfo{person}{Shu Wu}, \bibinfo{person}{Liang Wang}, {and}
  \bibinfo{person}{Tieniu Tan}.} \bibinfo{year}{2019}\natexlab{}.
\newblock \showarticletitle{{Hierarchical Graph Convolutional Networks for
  Semi-supervised Node Classification}}. In
  \bibinfo{booktitle}{\emph{Proceedings of the Twenty-Eighth International
  Joint Conference on Artificial Intelligence}}.
  \bibinfo{publisher}{IJCAI.org}, \bibinfo{pages}{4532--4539}.
\newblock


\bibitem[\protect\citeauthoryear{Kingma and Ba}{Kingma and Ba}{2015}]%
        {Kingma:2015us}
\bibfield{author}{\bibinfo{person}{Diederik~P. Kingma} {and}
  \bibinfo{person}{Jimmy Ba}.} \bibinfo{year}{2015}\natexlab{}.
\newblock \showarticletitle{{Adam: A Method for Stochastic Optimization}}. In
  \bibinfo{booktitle}{\emph{Proceedings of the 3rd International Conference on
  Learning Representations}}.
\newblock


\bibitem[\protect\citeauthoryear{Kipf and Welling}{Kipf and Welling}{2016}]%
        {Kipf:2016ul}
\bibfield{author}{\bibinfo{person}{Thomas~N. Kipf} {and} \bibinfo{person}{Max
  Welling}.} \bibinfo{year}{2016}\natexlab{}.
\newblock \showarticletitle{{Variational Graph Auto-Encoders}}. In
  \bibinfo{booktitle}{\emph{Bayesian Deep Learning Workshop@NIPS}}.
\newblock


\bibitem[\protect\citeauthoryear{Kipf and Welling}{Kipf and Welling}{2017}]%
        {Kipf:2016tc}
\bibfield{author}{\bibinfo{person}{Thomas~N. Kipf} {and} \bibinfo{person}{Max
  Welling}.} \bibinfo{year}{2017}\natexlab{}.
\newblock \showarticletitle{{Semi-Supervised Classification with Graph
  Convolutional Networks}}. In \bibinfo{booktitle}{\emph{Proceedings of the 5th
  International Conference on Learning Representations}}.
\newblock


\bibitem[\protect\citeauthoryear{Klicpera, Wei{\ss}enberger, and
  G{\"u}nnemann}{Klicpera et~al\mbox{.}}{2019}]%
        {Klicpera:2019vc}
\bibfield{author}{\bibinfo{person}{Johannes Klicpera}, \bibinfo{person}{Stefan
  Wei{\ss}enberger}, {and} \bibinfo{person}{Stephan G{\"u}nnemann}.}
  \bibinfo{year}{2019}\natexlab{}.
\newblock \showarticletitle{{Diffusion Improves Graph Learning}}. In
  \bibinfo{booktitle}{\emph{Advances in Neural Information Processing Systems
  32}}. \bibinfo{pages}{13333--13345}.
\newblock


\bibitem[\protect\citeauthoryear{Larsson, Maire, and Shakhnarovich}{Larsson
  et~al\mbox{.}}{2017}]%
        {Larsson:2017vt}
\bibfield{author}{\bibinfo{person}{Gustav Larsson}, \bibinfo{person}{Michael
  Maire}, {and} \bibinfo{person}{Gregory Shakhnarovich}.}
  \bibinfo{year}{2017}\natexlab{}.
\newblock \showarticletitle{{Colorization as a Proxy Task for Visual
  Understanding}}. In \bibinfo{booktitle}{\emph{{Proceedings of the 2017 {IEEE}
  Conference on Computer Vision and Pattern Recognition}}}.
  \bibinfo{publisher}{IEEE}, \bibinfo{pages}{840--849}.
\newblock


\bibitem[\protect\citeauthoryear{Linsker}{Linsker}{1988}]%
        {Linsker:1988ho}
\bibfield{author}{\bibinfo{person}{Ralph Linsker}.}
  \bibinfo{year}{1988}\natexlab{}.
\newblock \showarticletitle{{Self-Organization in a Perceptual Network}}.
\newblock \bibinfo{journal}{\emph{IEEE Computer}} \bibinfo{volume}{21},
  \bibinfo{number}{3} (\bibinfo{year}{1988}), \bibinfo{pages}{105--117}.
\newblock


\bibitem[\protect\citeauthoryear{Mernyei and Cangea}{Mernyei and
  Cangea}{2020}]%
        {Mernyei:2020wh}
\bibfield{author}{\bibinfo{person}{P{\'e}ter Mernyei} {and}
  \bibinfo{person}{Catalina Cangea}.} \bibinfo{year}{2020}\natexlab{}.
\newblock \showarticletitle{{Wiki-CS: A Wikipedia-Based Benchmark for Graph
  Neural Networks}}. In \bibinfo{booktitle}{\emph{ICML Workshop on Graph
  Representation Learning and Beyond}}.
\newblock


\bibitem[\protect\citeauthoryear{Mnih and Kavukcuoglu}{Mnih and
  Kavukcuoglu}{2013}]%
        {Kavukcuoglu:2013to}
\bibfield{author}{\bibinfo{person}{Andriy Mnih} {and} \bibinfo{person}{Koray
  Kavukcuoglu}.} \bibinfo{year}{2013}\natexlab{}.
\newblock \showarticletitle{{Learning Word Embeddings Efficiently with
  Noise-Contrastive Estimation}}. In \bibinfo{booktitle}{\emph{Advances in
  Neural Information Processing Systems 26}}. \bibinfo{pages}{2265--2273}.
\newblock


\bibitem[\protect\citeauthoryear{Newman}{Newman}{2018}]%
        {Newman:2018aw}
\bibfield{author}{\bibinfo{person}{Mark E.~J. Newman}.}
  \bibinfo{year}{2018}\natexlab{}.
\newblock \bibinfo{booktitle}{\emph{{Networks: An Introduction (Second
  Edition)}}}.
\newblock \bibinfo{publisher}{Oxford University Press}.
\newblock


\bibitem[\protect\citeauthoryear{Page, Brin, Motwani, and Winograd}{Page
  et~al\mbox{.}}{1999}]%
        {Page:1999wg}
\bibfield{author}{\bibinfo{person}{Lawrence Page}, \bibinfo{person}{Sergey
  Brin}, \bibinfo{person}{Rajeev Motwani}, {and} \bibinfo{person}{Terry
  Winograd}.} \bibinfo{year}{1999}\natexlab{}.
\newblock \bibinfo{booktitle}{\emph{{The PageRank Citation Ranking: Bringing
  Order to the Web}}}.
\newblock \bibinfo{type}{Technical Report}. \bibinfo{institution}{Stanford
  InfoLab}.
\newblock


\bibitem[\protect\citeauthoryear{Paszke, Gross, Massa, Lerer, Bradbury, Chanan,
  Killeen, Lin, Gimelshein, Antiga, Desmaison, Kopf, Yang, DeVito, Raison,
  Tejani, Chilamkurthy, Steiner, Fang, Bai, and Chintala}{Paszke
  et~al\mbox{.}}{2019}]%
        {Paszke:2019vf}
\bibfield{author}{\bibinfo{person}{Adam Paszke}, \bibinfo{person}{Sam Gross},
  \bibinfo{person}{Francisco Massa}, \bibinfo{person}{Adam Lerer},
  \bibinfo{person}{James Bradbury}, \bibinfo{person}{Gregory Chanan},
  \bibinfo{person}{Trevor Killeen}, \bibinfo{person}{Zeming Lin},
  \bibinfo{person}{Natalia Gimelshein}, \bibinfo{person}{Luca Antiga},
  \bibinfo{person}{Alban Desmaison}, \bibinfo{person}{Andreas Kopf},
  \bibinfo{person}{Edward Yang}, \bibinfo{person}{Zachary DeVito},
  \bibinfo{person}{Martin Raison}, \bibinfo{person}{Alykhan Tejani},
  \bibinfo{person}{Sasank Chilamkurthy}, \bibinfo{person}{Benoit Steiner},
  \bibinfo{person}{Lu Fang}, \bibinfo{person}{Junjie Bai}, {and}
  \bibinfo{person}{Soumith Chintala}.} \bibinfo{year}{2019}\natexlab{}.
\newblock \showarticletitle{{PyTorch: An Imperative Style, High-Performance
  Deep Learning Library}}. In \bibinfo{booktitle}{\emph{Advances in Neural
  Information Processing Systems 32}}. \bibinfo{pages}{8024--8035}.
\newblock


\bibitem[\protect\citeauthoryear{Peng, Huang, Luo, Zheng, Rong, Xu, and
  Huang}{Peng et~al\mbox{.}}{2020}]%
        {Peng:2020gw}
\bibfield{author}{\bibinfo{person}{Zhen Peng}, \bibinfo{person}{Wenbing Huang},
  \bibinfo{person}{Minnan Luo}, \bibinfo{person}{Qinghua Zheng},
  \bibinfo{person}{Yu Rong}, \bibinfo{person}{Tingyang Xu}, {and}
  \bibinfo{person}{Junzhou Huang}.} \bibinfo{year}{2020}\natexlab{}.
\newblock \showarticletitle{{Graph Representation Learning via Graphical Mutual
  Information Maximization}}. In \bibinfo{booktitle}{\emph{Proceedings of the
  Web Conference 2020}}. \bibinfo{publisher}{ACM}, \bibinfo{pages}{259--270}.
\newblock


\bibitem[\protect\citeauthoryear{Pennington, Socher, and Manning}{Pennington
  et~al\mbox{.}}{2014}]%
        {Pennington:2014kh}
\bibfield{author}{\bibinfo{person}{Jeffrey Pennington},
  \bibinfo{person}{Richard Socher}, {and} \bibinfo{person}{Christopher~D.
  Manning}.} \bibinfo{year}{2014}\natexlab{}.
\newblock \showarticletitle{{GloVe: Global Vectors for Word Representation}}.
  In \bibinfo{booktitle}{\emph{Proceedings of the 2014 Conference on Empirical
  Methods in Natural Language Processing}}. \bibinfo{publisher}{ACL},
  \bibinfo{pages}{1532--1543}.
\newblock


\bibitem[\protect\citeauthoryear{Perozzi, Al-Rfou, and Skiena}{Perozzi
  et~al\mbox{.}}{2014}]%
        {Perozzi:2014ib}
\bibfield{author}{\bibinfo{person}{Bryan Perozzi}, \bibinfo{person}{Rami
  Al-Rfou}, {and} \bibinfo{person}{Steven Skiena}.}
  \bibinfo{year}{2014}\natexlab{}.
\newblock \showarticletitle{{DeepWalk: Online Learning of Social
  Representations}}. In \bibinfo{booktitle}{\emph{{Proceedings of the 20th
  {ACM} {SIGKDD} International Conference on Knowledge Discovery and Data
  Mining}}}. \bibinfo{publisher}{ACM}, \bibinfo{pages}{701--710}.
\newblock


\bibitem[\protect\citeauthoryear{Poole, Ozair, van~den Oord, Alemi, and
  Tucker}{Poole et~al\mbox{.}}{2019}]%
        {Poole:2019vk}
\bibfield{author}{\bibinfo{person}{Ben Poole}, \bibinfo{person}{Sherjil Ozair},
  \bibinfo{person}{A{\"a}ron van~den Oord}, \bibinfo{person}{Alexander~A.
  Alemi}, {and} \bibinfo{person}{George Tucker}.}
  \bibinfo{year}{2019}\natexlab{}.
\newblock \showarticletitle{{On Variational Bounds of Mutual Information}}. In
  \bibinfo{booktitle}{\emph{Proceedings of the 36th International Conference on
  Machine Learning}} \emph{(\bibinfo{series}{Proceedings of Machine Learning
  Research}, Vol.~\bibinfo{volume}{97})}. \bibinfo{publisher}{PMLR},
  \bibinfo{pages}{5171--5180}.
\newblock


\bibitem[\protect\citeauthoryear{Qiu, Chen, Dong, Zhang, Yang, Ding, Wang, and
  Tang}{Qiu et~al\mbox{.}}{2020}]%
        {Qiu:2020gq}
\bibfield{author}{\bibinfo{person}{Jiezhong Qiu}, \bibinfo{person}{Qibin Chen},
  \bibinfo{person}{Yuxiao Dong}, \bibinfo{person}{Jing Zhang},
  \bibinfo{person}{Hongxia Yang}, \bibinfo{person}{Ming Ding},
  \bibinfo{person}{Kuansan Wang}, {and} \bibinfo{person}{Jie Tang}.}
  \bibinfo{year}{2020}\natexlab{}.
\newblock \showarticletitle{{GCC: Graph Contrastive Coding for Graph Neural
  Network Pre-Training}}. In \bibinfo{booktitle}{\emph{{Proceedings of the 26th
  {ACM} {SIGKDD} Conference on Knowledge Discovery and Data Mining}}}.
  \bibinfo{publisher}{ACM}, \bibinfo{pages}{1150--1160}.
\newblock


\bibitem[\protect\citeauthoryear{Qiu, Dong, Ma, Li, Wang, and Tang}{Qiu
  et~al\mbox{.}}{2018}]%
        {Qiu:2018ez}
\bibfield{author}{\bibinfo{person}{Jiezhong Qiu}, \bibinfo{person}{Yuxiao
  Dong}, \bibinfo{person}{Hao Ma}, \bibinfo{person}{Jian Li},
  \bibinfo{person}{Kuansan Wang}, {and} \bibinfo{person}{Jie Tang}.}
  \bibinfo{year}{2018}\natexlab{}.
\newblock \showarticletitle{{Network Embedding as Matrix Factorization:
  Unifying DeepWalk, LINE, PTE, and node2vec}}. In
  \bibinfo{booktitle}{\emph{{Proceedings of the Eleventh {ACM} International
  Conference on Web Search and Data Mining}}}. \bibinfo{publisher}{ACM},
  \bibinfo{pages}{459--467}.
\newblock


\bibitem[\protect\citeauthoryear{Schroff, Kalenichenko, and Philbin}{Schroff
  et~al\mbox{.}}{2015}]%
        {Schroff:2015wo}
\bibfield{author}{\bibinfo{person}{Florian Schroff}, \bibinfo{person}{Dmitry
  Kalenichenko}, {and} \bibinfo{person}{James Philbin}.}
  \bibinfo{year}{2015}\natexlab{}.
\newblock \showarticletitle{{FaceNet: A Unified Embedding for Face Recognition
  and Clustering}}. In \bibinfo{booktitle}{\emph{{Proceedings of the 2015
  {IEEE} Conference on Computer Vision and Pattern Recognition}}}.
  \bibinfo{publisher}{IEEE}, \bibinfo{pages}{815--823}.
\newblock


\bibitem[\protect\citeauthoryear{Shchur, Mumme, Bojchevski, and
  G{\"u}nnemann}{Shchur et~al\mbox{.}}{2018}]%
        {Shchur:2018vv}
\bibfield{author}{\bibinfo{person}{Oleksandr Shchur},
  \bibinfo{person}{Maximilian Mumme}, \bibinfo{person}{Aleksandar Bojchevski},
  {and} \bibinfo{person}{Stephan G{\"u}nnemann}.}
  \bibinfo{year}{2018}\natexlab{}.
\newblock \showarticletitle{{Pitfalls of Graph Neural Network Evaluation}}.
\newblock \bibinfo{journal}{\emph{arXiv.org}} (\bibinfo{date}{Nov.}
  \bibinfo{year}{2018}).
\newblock
\showeprint[arxiv]{1811.05868v2}~[cs.LG]


\bibitem[\protect\citeauthoryear{Srivastava, Hinton, Krizhevsky, Sutskever, and
  Salakhutdinov}{Srivastava et~al\mbox{.}}{2014}]%
        {Srivastava:2014cg}
\bibfield{author}{\bibinfo{person}{Nitish Srivastava},
  \bibinfo{person}{Geoffrey~E. Hinton}, \bibinfo{person}{Alex Krizhevsky},
  \bibinfo{person}{Ilya Sutskever}, {and} \bibinfo{person}{Ruslan~R.
  Salakhutdinov}.} \bibinfo{year}{2014}\natexlab{}.
\newblock \showarticletitle{{Dropout: A Simple Way to Prevent Neural Networks
  From Overfitting}}.
\newblock \bibinfo{journal}{\emph{Journal of Machine Learning Research}}
  \bibinfo{volume}{15}, \bibinfo{number}{1} (\bibinfo{year}{2014}),
  \bibinfo{pages}{1929--1958}.
\newblock


\bibitem[\protect\citeauthoryear{Tian, Krishnan, and Isola}{Tian
  et~al\mbox{.}}{2019}]%
        {Tian:2019vw}
\bibfield{author}{\bibinfo{person}{Yonglong Tian}, \bibinfo{person}{Dilip
  Krishnan}, {and} \bibinfo{person}{Phillip Isola}.}
  \bibinfo{year}{2019}\natexlab{}.
\newblock \showarticletitle{{Contrastive Multiview Coding}}.
\newblock \bibinfo{journal}{\emph{arXiv.org}} (\bibinfo{date}{June}
  \bibinfo{year}{2019}).
\newblock
\showeprint[arxiv]{1906.05849v4}~[cs.CV]


\bibitem[\protect\citeauthoryear{Tian, Sun, Poole, Krishnan, Schmid, and
  Isola}{Tian et~al\mbox{.}}{2020}]%
        {Tian:2020vw}
\bibfield{author}{\bibinfo{person}{Yonglong Tian}, \bibinfo{person}{Chen Sun},
  \bibinfo{person}{Ben Poole}, \bibinfo{person}{Dilip Krishnan},
  \bibinfo{person}{Cordelia Schmid}, {and} \bibinfo{person}{Phillip Isola}.}
  \bibinfo{year}{2020}\natexlab{}.
\newblock \showarticletitle{{What Makes for Good Views for Contrastive
  Learning}}.
\newblock \bibinfo{journal}{\emph{arXiv.org}} (\bibinfo{date}{May}
  \bibinfo{year}{2020}).
\newblock
\showeprint[arxiv]{2005.10243v1}~[cs.CV]


\bibitem[\protect\citeauthoryear{Tschannen, Djolonga, Rubenstein, Gelly, and
  Lucic}{Tschannen et~al\mbox{.}}{2020}]%
        {Tschannen:2020uo}
\bibfield{author}{\bibinfo{person}{Michael Tschannen}, \bibinfo{person}{Josip
  Djolonga}, \bibinfo{person}{Paul~K. Rubenstein}, \bibinfo{person}{Sylvain
  Gelly}, {and} \bibinfo{person}{Mario Lucic}.}
  \bibinfo{year}{2020}\natexlab{}.
\newblock \showarticletitle{{On Mutual Information Maximization for
  Representation Learning}}. In \bibinfo{booktitle}{\emph{Proceedings of the
  8th International Conference on Learning Representations}}.
\newblock


\bibitem[\protect\citeauthoryear{van~den Oord, Li, and Vinyals}{van~den Oord
  et~al\mbox{.}}{2018}]%
        {vandenOord:2018ut}
\bibfield{author}{\bibinfo{person}{A{\"a}ron van~den Oord},
  \bibinfo{person}{Yazhe Li}, {and} \bibinfo{person}{Oriol Vinyals}.}
  \bibinfo{year}{2018}\natexlab{}.
\newblock \showarticletitle{{Representation Learning with Contrastive
  Predictive Coding}}.
\newblock \bibinfo{journal}{\emph{arXiv.org}} (\bibinfo{year}{2018}).
\newblock
\showeprint[arxiv]{1807.03748v2}~[cs.LG]


\bibitem[\protect\citeauthoryear{Veli{\v c}kovi{\'c}, Cucurull, Casanova,
  Romero, Li{\`o}, and Bengio}{Veli{\v c}kovi{\'c} et~al\mbox{.}}{2018}]%
        {Velickovic:2018we}
\bibfield{author}{\bibinfo{person}{Petar Veli{\v c}kovi{\'c}},
  \bibinfo{person}{Guillem Cucurull}, \bibinfo{person}{Arantxa Casanova},
  \bibinfo{person}{Adriana Romero}, \bibinfo{person}{Pietro Li{\`o}}, {and}
  \bibinfo{person}{Yoshua Bengio}.} \bibinfo{year}{2018}\natexlab{}.
\newblock \showarticletitle{{Graph Attention Networks}}. In
  \bibinfo{booktitle}{\emph{Proceedings of the 6th International Conference on
  Learning Representations}}.
\newblock


\bibitem[\protect\citeauthoryear{Veli{\v c}kovi{\'c}, Fedus, Hamilton, Li{\`o},
  Bengio, and Hjelm}{Veli{\v c}kovi{\'c} et~al\mbox{.}}{2019}]%
        {Velickovic:2019tu}
\bibfield{author}{\bibinfo{person}{Petar Veli{\v c}kovi{\'c}},
  \bibinfo{person}{William Fedus}, \bibinfo{person}{William~L. Hamilton},
  \bibinfo{person}{Pietro Li{\`o}}, \bibinfo{person}{Yoshua Bengio}, {and}
  \bibinfo{person}{R.~Devon Hjelm}.} \bibinfo{year}{2019}\natexlab{}.
\newblock \showarticletitle{{Deep Graph Infomax}}. In
  \bibinfo{booktitle}{\emph{Proceedings of the 7th International Conference on
  Learning Representations}}.
\newblock


\bibitem[\protect\citeauthoryear{Wu, Zhang, de~Souza~Jr., Fifty, Yu, and
  Weinberger}{Wu et~al\mbox{.}}{2019}]%
        {Wu:2019vz}
\bibfield{author}{\bibinfo{person}{Felix Wu}, \bibinfo{person}{Tianyi Zhang},
  \bibinfo{person}{Amauri~Holanda de Souza~Jr.}, \bibinfo{person}{Christopher
  Fifty}, \bibinfo{person}{Tao Yu}, {and} \bibinfo{person}{Kilian~Q.
  Weinberger}.} \bibinfo{year}{2019}\natexlab{}.
\newblock \showarticletitle{{Simplifying Graph Convolutional Networks}}. In
  \bibinfo{booktitle}{\emph{Proceedings of the 36th International Conference on
  Machine Learning}} \emph{(\bibinfo{series}{Proceedings of Machine Learning
  Research}, Vol.~\bibinfo{volume}{97})}. \bibinfo{publisher}{PMLR},
  \bibinfo{pages}{6861--6871}.
\newblock


\bibitem[\protect\citeauthoryear{Wu, Zhuang, Mosse, Yamins, and Goodman}{Wu
  et~al\mbox{.}}{2020}]%
        {Wu:2020tj}
\bibfield{author}{\bibinfo{person}{Mike Wu}, \bibinfo{person}{Chengxu Zhuang},
  \bibinfo{person}{Milan Mosse}, \bibinfo{person}{Daniel Yamins}, {and}
  \bibinfo{person}{Noah Goodman}.} \bibinfo{year}{2020}\natexlab{}.
\newblock \showarticletitle{{On Mutual Information in Contrastive Learning for
  Visual Representations}}.
\newblock \bibinfo{journal}{\emph{arXiv.org}} (\bibinfo{date}{May}
  \bibinfo{year}{2020}).
\newblock
\showeprint[arxiv]{2005.13149v2}~[cs.LG]


\bibitem[\protect\citeauthoryear{Wu, Xiong, Yu, and Lin}{Wu
  et~al\mbox{.}}{2018}]%
        {Wu:2018kw}
\bibfield{author}{\bibinfo{person}{Zhirong Wu}, \bibinfo{person}{Yuanjun
  Xiong}, \bibinfo{person}{Stella~X. Yu}, {and} \bibinfo{person}{Dahua Lin}.}
  \bibinfo{year}{2018}\natexlab{}.
\newblock \showarticletitle{{Unsupervised Feature Learning via Non-Parametric
  Instance Discrimination}}. In \bibinfo{booktitle}{\emph{{Proceedings of the
  2018 {IEEE} Conference on Computer Vision and Pattern Recognition}}}.
  \bibinfo{publisher}{IEEE}, \bibinfo{pages}{3733--3742}.
\newblock


\bibitem[\protect\citeauthoryear{Xiao, Wang, Efros, and Darrell}{Xiao
  et~al\mbox{.}}{2020}]%
        {Xiao:2020vt}
\bibfield{author}{\bibinfo{person}{Tete Xiao}, \bibinfo{person}{Xiaolong Wang},
  \bibinfo{person}{Alexei~A. Efros}, {and} \bibinfo{person}{Trevor Darrell}.}
  \bibinfo{year}{2020}\natexlab{}.
\newblock \showarticletitle{{What Should Not Be Contrastive in Contrastive
  Learning}}.
\newblock \bibinfo{journal}{\emph{arXiv.org}} (\bibinfo{date}{Aug.}
  \bibinfo{year}{2020}).
\newblock
\showeprint[arxiv]{2008.05659v1}~[cs.CV]


\bibitem[\protect\citeauthoryear{Ye, Zhang, Yuen, and Chang}{Ye
  et~al\mbox{.}}{2019}]%
        {Ye:2019we}
\bibfield{author}{\bibinfo{person}{Mang Ye}, \bibinfo{person}{Xu Zhang},
  \bibinfo{person}{Pong~C. Yuen}, {and} \bibinfo{person}{Shih-Fu Chang}.}
  \bibinfo{year}{2019}\natexlab{}.
\newblock \showarticletitle{{Unsupervised Embedding Learning via Invariant and
  Spreading Instance Feature}}. In \bibinfo{booktitle}{\emph{{Proceedings of
  the 2019 {IEEE} Conference on Computer Vision and Pattern Recognition}}}.
  \bibinfo{publisher}{IEEE}, \bibinfo{pages}{6210--6219}.
\newblock


\bibitem[\protect\citeauthoryear{Zachary}{Zachary}{1977}]%
        {Zachary:1977fs}
\bibfield{author}{\bibinfo{person}{Wayne~W. Zachary}.}
  \bibinfo{year}{1977}\natexlab{}.
\newblock \showarticletitle{{An Information Flow Model for Conflict and Fission
  in Small Groups}}.
\newblock \bibinfo{journal}{\emph{Journal of Anthropological Research}}
  \bibinfo{volume}{33}, \bibinfo{number}{4} (\bibinfo{year}{1977}),
  \bibinfo{pages}{452--473}.
\newblock


\bibitem[\protect\citeauthoryear{Zhu, Xu, Yu, Liu, Wu, and Wang}{Zhu
  et~al\mbox{.}}{2020}]%
        {Zhu:2020vf}
\bibfield{author}{\bibinfo{person}{Yanqiao Zhu}, \bibinfo{person}{Yichen Xu},
  \bibinfo{person}{Feng Yu}, \bibinfo{person}{Qiang Liu}, \bibinfo{person}{Shu
  Wu}, {and} \bibinfo{person}{Liang Wang}.} \bibinfo{year}{2020}\natexlab{}.
\newblock \showarticletitle{{Deep Graph Contrastive Representation Learning}}.
  In \bibinfo{booktitle}{\emph{ICML Workshop on Graph Representation Learning
  and Beyond}}.
\newblock


\end{thebibliography}

\end{document}